\newcommand{\Pclass}{\mathsf{P}}
\newcommand{\NP}{\mathsf{NP}}
\newcommand{\MAXSNP}{\mathsf{MAX SNP}}
\newcommand{\R}{\mathbb{R}}
\newcommand{\MAXCUT}{\text{Max-Cut}}
\newcommand{\norm}[1]{||#1||}
\newcommand\inprod[1]{\langle #1 \rangle}
\newcommand{\func}{\circ}
\newcommand{\ACTL}{\text{diag}}
\newcommand{\act}{\sigma}
\newcommand{\dualv}{\lambda}
\newcommand{\dualvec}{\Lambda}
\newcommand{\trace}{\text{tr}}
\newcommand{\relu}{\text{ReLU}}
\newcommand{\obju}{\zeta}
\renewcommand{\cite}{\citep}
\theoremstyle{plain}
\newtheorem{theorem}{Theorem}[section]
\newtheorem{lemma}[theorem]{Lemma}
\theoremstyle{definition}
\theoremstyle{remark}
\newtheorem{remark}[theorem]{Remark}
\renewcommand{\paragraph}[1]{\vspace{.05in}\noindent\textbf{{#1}.~~}}
\title{A Quantitative Geometric Approach to Neural-Network Smoothness}
\author{%
  Zi Wang
  %\thanks{Use footnote for providing further information
    %about author (webpage, alternative %address)---\emph{not} for acknowledging
    %funding agencies.} 
    \\
  University of Wisconsin-Madison\\
  \texttt{zw@cs.wisc.edu} \\
  % examples of more authors
   \And
  Gautam Prakriya \\
  The Chinese University of Hong Kong \\
   \texttt{gautamprakriya@gmail.com} \\
   \AND
   Somesh Jha \\
  University of Wisconsin-Madison\\
  \texttt{jha@cs.wisc.edu} \\
  % \And
  % Coauthor \\
  % Affiliation \\
  % Address \\
  % \texttt{email} \\
  % \And
  % Coauthor \\
  % Affiliation \\
  % Address \\
  % \texttt{email} \\
}
\begin{document}

\maketitle

\begin{abstract}
Fast and precise Lipschitz constant estimation of neural networks is an important task for deep learning. Researchers have recently found an intrinsic trade-off between the accuracy and smoothness of neural networks, so training a network with a loose Lipschitz constant estimation imposes a strong regularization, and can hurt the model accuracy significantly. In this work, we provide a unified theoretical framework, a quantitative geometric approach, to address the Lipschitz constant estimation. By adopting this framework, we can immediately obtain several theoretical results, including the computational hardness of Lipschitz constant estimation and its approximability. We implement the algorithms induced from this quantitative geometric approach, which are based on semidefinite programming (SDP).\footnote{Our code is available at \url{https://github.com/z1w/GeoLIP}.} Our empirical evaluation demonstrates that they are more scalable and precise than existing tools on Lipschitz constant estimation for $\ell_\infty$-perturbations.
%Furthermore, the quantitative geometric perspective can also provide some insight into recent empirical observations that techniques for one norm do not usually transfer to another one.
%propose a semidefinite programming algorithm to estimate the upper bound of the Lipschitz constant. We provide tight bounds for the estimation in the case of single-hidden layer networks, using the celebrated Grothendieck inequality. This is the first known approximation guarantee to this Lipschitz constant upper bound of neural networks. 
%We also implement the algorithms induced from this quantitative geometric approach, in a tool GeoLIP. These algorithms are based on semidefinite programming (SDP). Our empirical evaluation demonstrates GeoLIP is more scalable and precise than existing tools on Lipschitz constant estimation for $\ell_\infty$-perturbations. 
Furthermore, we also show their intricate relations with other recent SDP-based techniques, both theoretically and empirically. We believe that this unified quantitative geometric perspective can bring new insights and theoretical tools to the investigation of neural-network smoothness and robustness.
\end{abstract}

\section{Introduction}\label{sec:intro}

The past decade has witnessed the unprecedented success of deep learning in many machine learning tasks~\citep{Krizhevsky_imagenetclassification,nlp_2013}. Despite the growing popularity of deep learning, researchers have also found that neural networks are very vulnerable to adversarial attacks~\citep{szegedy2014intriguing,goodfellow2015explaining,papernot2015limitations}. As a result, it is important to train neural networks that are robust against those attacks~\cite{madry2019deep}. In recent years, the deep learning community starts to focus on certifiably robust neural networks~\cite{albarghouthi2021introduction,heinNIPS,reluplex,cohen2019certified,certified_def,IUA,gloro}. 
One way to achieve certified robustness is to estimate the smoothness of neural networks, where the smoothness is measured by the Lipschitz constant of the neural network. Recent works have found that to achieve both high accuracy and low Lipschitzness, the network has to significantly increase the model capacity~\citep{bubeck2021a}. This implies that there is an intrinsic tension between the accuracy and smoothness of neural networks. 

Commonly considered adversarial attacks are the $\ell_\infty$ and $\ell_2$-perturbations in the input space. \citet{gloro,cohen2019certified} have successfully trained networks with low $\ell_2$-Lipschitz constant, and \citet{huang2021local} trained networks with low local Lipschitzness for $\ell_\infty$-perturbations.
There are few well-established techniques to train neural networks with low global Lipschitzness for $\ell_\infty$-perturbations. The techniques for the $\ell_2$-perturbation do not easily transfer to the $\ell_\infty$-case~\cite{gloro}. One critical step is to measure the Lipschitz constant more precisely and efficiently.
\citet{jordan2021exactly} showed that for ReLU networks, it is $\NP$-hard to approximate the Lipschitz constant for $\ell_\infty$-perturbations within a constant factor , and proposed an exponential-time algorithm to compute the exact Lipschitz constant. However, researchers are interested in more scalable approaches to certify and train networks. In this work, we consider the \emph{Formal Global Lipschitz} constant (FGL) (See~\cref{eq:FGL}), which is roughly the maximum of the gradient operator norm, assuming all activation patterns on hidden layers are independent and possible. FGL is an upper bound of the exact Lipschitz constant and has been used in~\citet{certified_def,lipsdp,lipopt}.

%In theory, activation functions in a neural network are correlated, and the supremum of all feasible activation patterns induces the \emph{true} global Lipschitz constant. In practice, we can treat all activations independently, and consider the maximum gradient norm over all possible activation patterns. This upper bound of Lipschitz constant, which we call the \emph{Formal Global Lipschitz} (FGL) constant, is also beneficial for certifying the robustness of neural networks~\cite{lipreg,lipopt,chen2020semialgebraic,certified_def,lipsdp,anti_lipSDP}. Throughout the paper, we mainly consider the estimation of FGL for $\ell_\infty$ and $\ell_2$-perturbations.

We address the Lipschitz constant estimation from the quantitative geometric perspective. Quantitative geometry aims to understand  geometric structures via quantitative parameters, and has connections with many mathematical disciplines, such as functional analysis and combinatorics~\citep{quant_geo}. In computer science, quantitative geometry plays a central role in understanding the approximability of discrete optimization problems. We approach those hard discrete optimization problems by considering the efficiently solvable continuous counterparts, and analyze the precision loss due to relaxation, which is often the SDP relaxation~\citep{maxcut,nest,cut-norm}. The natural SDP relaxations for the intractable problems usually induce the optimal known polynomial time algorithms~\citep{sepr}. By adopting the quantitative geometric approach, we can immediately understand the computational hardness and approximability of FGL estimations. Our algorithms on two-layer networks are the natural SDP relaxations from the quantitative geometric perspective.

\citet{lipopt} employed polynomial optimization methods on the FGL estimation for $\ell_\infty$-perturbations.
Polynomial optimization is a very general framework, and many problems can be cast in this framework~\citep{motzkin_straus_1965,maxcut}. Therefore, we argue that this is not a precise characterization of the FGL-estimation problem. On the other hand, there are also several SDP-based techniques for FGL estimations. \citet{certified_def} proposed an SDP algorithm to estimate the FGL for $\ell_\infty$-perturbations of two layer networks, and \citet{lipsdp} devised an SDP algorithm to estimate the $\ell_2$-FGL. We will demonstrate the intricate relationships between our algorithms and these existing SDPs on two-layer networks.%, and their relationships in return inspires us to design a complete framework and algorithms for FGL estimations on multi-layer networks. 

Several empirical studies have found that techniques on one $\ell_p$-perturbations often do not transfer to another ones, even though the authors claim that \emph{in theory} these techniques should transfer~\citep{lipsdp,gloro}. This in-theory claim usually comes from a \emph{qualitative} perspective. In finite-dimensional space, one can always bound one $\ell_p$-norm from another one, so techniques for one $\ell_p$-perturbations can also provide another bound for a different $\ell_p$-perturbations. However, this bound is loose and in practice not useful~\citep{lipopt}. Instead, we believe that when transferring techniques from one norm to another one, we should consider the quantitative geometric principle: we should separate the geometry-dependent component from the geometry-independent one in those techniques, and modify the geometry-dependent component accordingly for a different normed space. As we will demonstrate, our whole work is guided by this principle. We hope that our unified quantitative geometric framework can bring insights to the empirical hard-to-transfer observations, and new tools to address these issues.
%For example, a Boolean constraint can be simply expressed as $x^2=1$, and many hard combinatorial optimization problems can be formulated as a polynomial optimization problem~\citep{motzkin_straus_1965,maxcut}. Therefore, it is generally hard to solve a polynomial optimization problem~\citep{lasserre_2015}, and the polynomial optimization approaches to estimate the Lipschitz constant do not scale well to the scope of practically used networks. 

%In this work, we propose a semidefinite programming (SDP) based algorithm estimate the Lipschitz constant of $\ell_\infty$-perturbation. Our algorithm is motivated by the SDP relaxation to the matrix mixed norm problem, which has a tight theoretical bound from the celebrated Grothendieck inequality~\citep{cut-norm,GroIneq}. This line of work is pioneered by the seminal Goemans-Williamson algorithm to the MAXCUT problem~\citep{maxcut}. We will demonstrate the close relationship between the FGL estimation and the mixed norm problem. This in return provides several theoretical implications and practical results.

\paragraph{Contributions}To summarize, we have made the following contributions:
\begin{enumerate}
    \item We provide a unified theoretical framework to address FGL estimations, which immediately yields the computational hardness and SDP-based approximation algorithms on two-layer networks (\cref{sec:two-layer}). %These computational hardness and approximability results are the first for FGL estimations.
    \item We demonstrate the relations between our algorithms and other SDP-based tools, which in return inspires us to design the algorithms for multi-layer networks. This provides more insightful and compositional interpretations of existing works, and makes them easier-to-generalize (\cref{sec:dual,sec:dual_infty}). 
    \item We implement the algorithms and name the tool GeoLIP. We empirically validate our theoretical claims, and compare GeoLIP with existing methods to estimate FGL for $\ell_\infty$-perturbations. The result shows that GeoLIP provides a tighter bound (20\%-60\% improvements) than existing tools on small networks, and much better results than the naive matrix-norm-product method for deep networks, which existing tools cannot handle (\cref{sec:eval}).
\end{enumerate}

\section{Preliminaries}\label{sec:prelim}
\paragraph{Notation}
Let $[n] = \{1,\ldots,n\}$. For two functions $f$ and $g$, $f\func g(x) = f(g(x))$. A 0-1 cube is $\{0,1\}^n$, and a norm-1 cube is $\{-1,1\}^n$ for some integer $n >0$. $\R_+ = [0,\infty)$.
For any vector $v\in \R^n$, $\ACTL(v)$ is an $n\times n$ diagonal matrix, with diagonal values $v$. Let $e_n = (1,\ldots, 1)\in \R^n$ be an $n$-dimensional vector of all $1$'s; and $I_n = \ACTL(e_n)$, the identity matrix. Let $\norm{v}_p$ denote the $\ell_p$ norm of $v$. We use $q$ to denote the Hölder conjugate of $p$ as a convention, i.e., $\frac{1}{p} + \frac{1}{q}=1$. If $v$ is an operator in the $\ell_p$-space, the operator norm of $v$ is then $\norm{v}_q$. Throughout the paper, we consider the $\ell_p$-norm of the input's perturbation, and therefore, the $\ell_q$-norm of the gradient, which acts as an operator on the perturbation.
A square matrix $X\succeq 0$ means that $X$ is positive semidefinite (PSD). Let $\trace(X)$ be the trace of a square matrix $X$. If $a, b\in \R^n$, let $\inprod{a,b}$ be the inner product of $a$ and $b$. 

\paragraph{Lipschitz function}
Given two metric spaces $(X, d_X)$ and $(Y, d_Y)$, a function $f: X\rightarrow Y$ is \emph{Lipschitz} continuous if there exists $K > 0$ such that for all $x_1, x_2\in X$,
\begin{equation}\label{eq:lipDef}
    d_Y(f(x_2), f(x_1))\leq Kd_X(x_2, x_1).
\end{equation}
The smallest such $K$ satisfying~\cref{eq:lipDef}, denoted by $K_f$, is called the Lipschitz constant of $f$.
For neural networks, $X$ is in general $\mathbb{R}^m$ equipped with the $\ell_p$-norm. We will only consider the case when $Y=\R$. In actual applications such as a classification task, a neural network has multiple outputs. The prediction is the class with the maximum score. One can then use the margin between each pair of class predictions and its Lipschitz constant to certify the robustness of a given prediction~\citep{certified_def,gloro}.
From Rademacher's theorem, if $f$ is Lipschitz continuous, then $f$ is is almost everywhere differentiable, and $K_f = \sup_{x} \norm{\nabla f(x)}_q$.

\paragraph{Neural network as function}
A neural network $f: \R^m\rightarrow \R$ is characterized by: %characterized by the following computation:
\[
f_1(x)=  W^1 x+b_1;\; f_i(x)=  W^i\act(f_{i-1}(x))+b_i, i=2,\ldots,d.
\]
where $W^i\in \R^{n_{i+1}\times n_{i}}$ is the weight matrix between the layers, $n_1 = m$, $d$ is the depth of the neural network, $\act$ denotes an activation function, $b_i\in \R^{n_{i+1}}$ is the bias term, and $f = f_d\func\cdots\func f_1$. Because we only consider the $\R$ as the codomain of $f$, $W^d = u \in \R^{1\times n_{d}}$ is a vector. From chain rule, the gradient of this function is 
\begin{equation}\label{eq:grad}
    \nabla f(x) = (W^1)^T[\ACTL(\act'(f_{1}(x))) (W^2)^T \cdots \ACTL(\act'(f_{d-1}(x))) (W^d)^T].
\end{equation}
Common activation functions, including $\relu$~\citep{relu}, sigmoid functions, and ELU~\citep{Clevert2016FastAA} are almost everywhere differentiable. As a result, we are interested in the supremum operator norm of~\cref{eq:grad}.

However, checking all possible inputs $x$ is infeasible, and common activation functions have bounded derivative values, say $[a,b]$. We are then interested in the following value instead:
\begin{equation}\label{eq:FGL}
    \max_{v^i\in[a,b]^{n_i}} \norm{(W^1)^T \cdot \ACTL(v^2) \cdot\cdots\cdot \ACTL(v^d) (W^{d})^T}_q,
\end{equation}
where $n_i$ is the dimension of each $\ACTL(v^i)$. We call this value the \emph{formal} global Lipschitz constant (FGL) because we treat all activation functions independent but in reality not all activation patterns are feasible. Therefore, this is an upper bound of the \emph{true} global Lipschitz constant of the neural network. However, it is the value studied in most global Lipschitzness literature~\citep{lipreg,lipsdp,lipopt}, and also turns out useful in certifying the robustness of neural networks~\citep{certified_def,gloro,anti_lipSDP}. We use $\ell_p$-FGL to denote the FGL for $\ell_p$-perturbations.

In this paper, we focus on the $\ell_p$-perturbation on the input, where $p=\infty$ or $p=2$. Because $q$ is the Hölder conjugate of $p$, we are interested in the value of~\cref{eq:FGL}, when $q = 1$ (for $p=\infty$), and $q = 2$ (for $p=2$).
Notice that in the $\relu$-network case, $[a, b]$ is $[0,1]$. We will use $\relu$-networks as the illustration for the rest of the paper because of the popularity of $\relu$ in practice and the easy presentation of the 0-1-cube. However, the algorithms presented in this work can be adapted with minor adjustments to other common activation functions. %For the maximization problem considered in the work, we say an algorithm $\mathcal{A}$ has an $\alpha$-approximation ratio, if for all instances
\begin{remark}
FGL considers all possible activation patterns on the hidden layers, while some of the activation patterns might be unachievable in reality. Therefore, FGL is an upper bound of the true Lipschitz constant. Notice that the activation pattern induced from an input is also decided by the bias term. Therefore, to find the true Lipschitz constant, one has to incorporate the information from the bias term.
\end{remark}
\section{Two-layer neural networks}\label{sec:two-layer}
In this section, we consider the two-layer neural network case. We reduce the FGL estimation to the matrix mixed-norm problem. This immediately yields the computational complexity and approximation algorithms for FGL estimations. In~\cref{sec:analysis}, we show that we can consider $\{0,1\}$ instead of $[0,1]$ in~\cref{eq:FGL} for two-layer networks.

\paragraph{Problem description}%The two-layer neural networks can be defined as $f(x)= u  \ACTL(y) Wx$, %in the following way:
%\[
%    f(x)= u  \ACTL(y) Wx,
%\]
%where $W\in \R^{n\times m}$, $\ACTL(y)\in \R^{n\times n}$, and $u\in \R^{1\times n}$. 
For a two-layer network where $W^1 = W\in \R^{n\times m}$ and $W^d = u\in \R^{1\times n}$,
its FGL (as in~\cref{eq:FGL}) is
$\max_{y\in\{0, 1\}^n} \norm{W^T \ACTL(y) u^T}_q$, where we use $y$ to denote $v^1$ in this case.
If we expand the matrix multiplication, it is easy to check that this equals to $\max_{y\in\{0, 1\}^n}\norm{ W^T \ACTL(u) y}_q$. %the following equation:
%\[
%    \max_{y\in\{0, 1\}^n}\norm{ W^T \ACTL(u) y}_q.
%\]
Let $A = W^T \ACTL(u)$, then the $\ell_p$-FGL is
\begin{equation}\label{eq:two_layer_obj}
    \max_{y\in\{0, 1\}^n} \norm{A y}_q.
\end{equation}
\subsection{$\ell_\infty$-FGL estimation}%\label{sec:rescale}
We consider a natural SDP relaxation to~\cref{eq:two_layer_obj} when $q=1$, and analyze the result using the celebrated \emph{Grothendieck Inequality}, which is a fundamental tool in functional analysis.

\paragraph{Mixed-norm problem} The $\infty\rightarrow 1$ mixed-norm of a matrix is defined as 
\[
\norm{A}_{\infty\rightarrow 1} = \max_{\norm{x}_\infty=1}\norm{Ax}_1.
\]
The mixed-norm problem appears similar to \cref{eq:two_layer_obj} when $q=1$, except for that instead of a norm-$1$-cube, the cube in \cref{eq:two_layer_obj} is a 0-1-cube. 
\citet{cut-norm} showed that it is $\NP$-hard, specifically $\MAXSNP$-hard, to compute the $\infty\rightarrow 1$ mixed-norm of a matrix $A$, via a reduction to the graph $\MAXCUT$ problem. 
%
%\citet{cut-norm} showed that the mixed norm of a matrix is $\NP$-hard, specifically $\MAXSNP$ hard In the following, we will demonstrate some cube rescaling techniques between the 0-1-cube and the norm-$1$ cube. They will provide several theoretical and practical results for the FGL estimation problem.
%
%\paragraph{SDP for the Mixed-Norm Problem}
Moreover, \citet{cut-norm} constructed a natural SDP relaxation for the mixed-norm problem:
\begin{align}\label{eq:sdp_mnp1}
\begin{split}
    \max \trace(BX)\\ 
    s.t.\;  X\succeq 0, X_{ii}=1, & i\in [n+m],
\end{split}
\end{align}
where $A$ is a submatrix of $B$. We provide the detailed derivation of this relaxation in~\cref{sec:mixed-norm}.
In fact, this relaxation admits a constant approximation factor. \citet{GroIneq} developed the local theory of Banach spaces, and showed that there exists an absolute value $K_G$ such that
\begin{theorem}\label{thm:groth-local}
For any $m, n\geq 1$, $A\in \R^{n\times m}$, and any Hilbert space $H$, the following holds:
\[
    \max_{u_i, v_j\in B(H)}\sum_{i,j} A_{ij}\inprod{u_i, v_j}_H \leq K_G\norm{A}_{\infty\rightarrow 1},
\]
\end{theorem}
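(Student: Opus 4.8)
\medskip
\noindent\textbf{Proof strategy.}~ The plan is to prove the inequality by Krivine's randomized-rounding argument, which in fact gives the explicit bound $K_G\le \pi/\big(2\ln(1+\sqrt{2})\big)$. First I would make the right-hand side combinatorial: since $\norm{w}_1=\max_{\norm{s}_\infty=1}\inprod{s,w}$ and the extreme points of the $\ell_\infty$-ball are sign vectors,
\[
\norm{A}_{\infty\rightarrow 1}=\max\Big\{\,\sum_{i,j}A_{ij}\varepsilon_i\delta_j\ :\ \varepsilon\in\{-1,1\}^n,\ \delta\in\{-1,1\}^m\,\Big\}.
\]
Hence it suffices to produce, for arbitrary unit vectors $u_i,v_j\in H$, random signs $\varepsilon_i,\delta_j\in\{-1,1\}$ with $\expt\big[\sum_{i,j}A_{ij}\varepsilon_i\delta_j\big]=\gamma\sum_{i,j}A_{ij}\inprod{u_i,v_j}_H$ for some absolute constant $\gamma>0$: the left side never exceeds $\norm{A}_{\infty\rightarrow 1}$, and since the maximum over $B(H)$ is attained on the unit sphere (the objective is linear in each $u_i$ and each $v_j$), the theorem follows with $K_G=1/\gamma$. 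We may also assume $H=\R^{n+m}$ by passing to the span of the $u_i,v_j$.

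The rounding is hyperplane rounding against a standard Gaussian vector $g$, applied after a nonlinear change of variables. The engine is \emph{Grothendieck's identity}: for unit vectors $a,b$,
\[
\expt\big[\sign\inprod{g,a}\cdot\sign\inprod{g,b}\big]=\tfrac{2}{\pi}\arcsin\inprod{a,b},
\]
which I would derive by restricting $g$ to the plane spanned by $a,b$ and using rotational invariance. Because $\arcsin$ is nonlinear, applying this to the $u_i,v_j$ directly does not recover $\inprod{u_i,v_j}$; Krivine's trick is to first replace $u_i$ by $\hat u_i=\bigoplus_{k\ge 0}\beta_k\,u_i^{\otimes(2k+1)}$ and $v_j$ by $\hat v_j=\bigoplus_{k\ge 0}(-1)^k\beta_k\,v_j^{\otimes(2k+1)}$ inside $\bigoplus_{k\ge 0}H^{\otimes(2k+1)}$, where $\beta_k=\big(c^{2k+1}/(2k+1)!\big)^{1/2}$. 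Using $\inprod{a^{\otimes\ell},b^{\otimes\ell}}=\inprod{a,b}^\ell$ one gets $\inprod{\hat u_i,\hat v_j}=\sin\!\big(c\inprod{u_i,v_j}\big)$ while $\norm{\hat u_i}^2=\norm{\hat v_j}^2=\sinh c$, so the choice $c=\sinh^{-1}(1)=\ln(1+\sqrt 2)$ makes all the $\hat u_i,\hat v_j$ unit vectors. Rounding with $\varepsilon_i=\sign\inprod{g,\hat u_i}$ and $\delta_j=\sign\inprod{g,\hat v_j}$, for $g$ Gaussian in the finite-dimensional span of the $\hat u_i,\hat v_j$, and using $c<\pi/2$ so that $\arcsin(\sin(ct))=ct$ for $|t|\le 1$, Grothendieck's identity yields $\expt[\varepsilon_i\delta_j]=\tfrac{2c}{\pi}\inprod{u_i,v_j}$; this is the desired relation with $\gamma=2c/\pi$, hence $K_G=\pi/(2c)$.

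The step I expect to carry the weight is the bookkeeping in Krivine's construction: verifying that the unit-norm requirement forces exactly $\sinh c=1$, checking the side condition $c<\pi/2$ that licenses cancelling $\arcsin$ against $\sin$, and making the $\ell^2$-direct sum of tensor powers rigorous — the last being routine, since the rounding only ever involves the finite-dimensional span of the $\hat u_i,\hat v_j$ and two-dimensional Gaussian marginals. If one only wants the \emph{existence} of a finite constant, a shorter variant avoids Krivine's construction altogether: expanding $\arcsin(t)=\sum_{k\ge 0}c_k t^{2k+1}$ with $c_0=1$, $c_k>0$, and $\sum_{k\ge 1}c_k=\tfrac{\pi}{2}-1$, rearranging Grothendieck's identity into $\inprod{u_i,v_j}=\tfrac{\pi}{2}\expt[\varepsilon_i\delta_j]-\sum_{k\ge 1}c_k\inprod{u_i^{\otimes(2k+1)},v_j^{\otimes(2k+1)}}$, pairing with $A_{ij}$, and bounding each tensor-power term in absolute value by $M:=\sup\sum_{i,j}A_{ij}\inprod{u_i,v_j}$ gives the self-improvement $M\le \tfrac{\pi}{2}\norm{A}_{\infty\rightarrow 1}+\big(\tfrac{\pi}{2}-1\big)M$, hence $K_G\le \pi/(4-\pi)$. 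I would nonetheless present Krivine's version, since it costs little extra and delivers the sharper constant.
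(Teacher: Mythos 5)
This statement is Grothendieck's inequality, which the paper does not prove at all: it is quoted as a classical result, attributed to Grothendieck's work on the local theory of Banach spaces, with the bound $K_G<1.783$ cited from Krivine. So there is no in-paper argument to match yours against; what you supply is a self-contained proof, and it is the standard and correct one. Your route — reduce $\norm{A}_{\infty\rightarrow 1}$ to a maximum over sign vectors, prove Grothendieck's identity $\expt[\sign\inprod{g,a}\sign\inprod{g,b}]=\tfrac{2}{\pi}\arcsin\inprod{a,b}$ by two-dimensional rotation invariance, and then run Krivine's preprocessing $\hat u_i=\bigoplus_k \beta_k u_i^{\otimes(2k+1)}$, $\hat v_j=\bigoplus_k(-1)^k\beta_k v_j^{\otimes(2k+1)}$ with $\sinh c=1$ so that $\inprod{\hat u_i,\hat v_j}=\sin(c\inprod{u_i,v_j})$ and hyperplane rounding linearizes exactly — is sound, and it yields $K_G\le \pi/\bigl(2\ln(1+\sqrt{2})\bigr)\approx 1.7822$, which is precisely the constant the paper invokes; your checks ($c=\ln(1+\sqrt 2)<\pi/2$, unit norms, restriction to the finite-dimensional span, reduction from the ball $B(H)$ to the sphere by linearity in each vector) are the right ones. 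The alternative bootstrap you sketch (rearranging the $\arcsin$ series and self-improving to $K_G\le \pi/(4-\pi)$) is also valid, with two small points worth making explicit if you write it out: the self-improvement step needs $M<\infty$ a priori, which follows trivially from Cauchy--Schwarz ($M\le\sum_{i,j}|A_{ij}|$), and bounding each tensor-power term by $M$ in absolute value uses that $M$ is a supremum over all Hilbert spaces and all unit vectors (so tensor powers are admissible) together with the symmetry $v_j\mapsto -v_j$. Neither point is a gap in the approach, only bookkeeping to include in a full write-up.
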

where $B(H)$ denotes the unit ball of the Hilbert space.

The precise value of $K_G$ is still an outstanding open problem, and it is known that $K_G< 1.783$~\citep{KRIVINE197916,GroCKri}. 
The approximation factor of the SDP relaxation in~\cref{eq:sdp_mnp1} is $K_G$.
%\paragraph{Complexity of FGL Estimation} 
 %The cut-norm of a matrix is essentially the same as defined in~\cref{eq:nm1} except for that the constraint is a 0-1-cube. As a result, one can view the FGL estimation as a mixture of the cut-norm and the mixed-norm problems.
Similar to the mixed-norm problem, we show that the $\ell_\infty$-FGL estimation is $\MAXSNP$-hard and provide an SDP relaxation, which also admits the $K_G$-approximation ratio. We provide a detailed explanation on why $K_G$ is the approximation ratio and how we can view the SDP relaxation as a geometric transformation in~\cref{sec:grothSDP}.
\begin{theorem}\label{thm:hardness}
$\ell_\infty$-FGL estimation is $\MAXSNP$-hard.
\end{theorem}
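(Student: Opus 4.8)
The plan is to reduce the cut-norm problem, shown $\MAXSNP$-hard in \cite{cut-norm}, to FGL estimation via the cube rescaling technique promised above. It suffices to handle two-layer $\relu$ networks under $\ell_\infty$-perturbations ($q=1$): such networks form a subclass of all $\relu$ networks, by \cref{eq:two_layer_obj} their FGL is exactly $\max_{y\in\{0,1\}^n}\norm{Ay}_1$ with $A=W^T\ACTL(u)$, and conversely every matrix $A$ is realized this way (take $u=e_n$, $W=A^T$). Using $\norm{v}_1=\max_{z\in\{-1,1\}^m}\inprod{z,v}$, this value is the bilinear maximum $\max_{y\in\{0,1\}^n,\,z\in\{-1,1\}^m}\inprod{z,Ay}$, a maximum over one $0$-$1$-cube and one norm-$1$-cube; by contrast $\norm{M}_{\cut}=\max\bigl(\cut^+(M),\cut^+(-M)\bigr)$ with $\cut^+(M):=\max_{x\in\{0,1\}^n,\,w\in\{0,1\}^m}\inprod{w,Mx}$ is a maximum over two $0$-$1$-cubes. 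The whole task is thus to convert the $w$-side cube from $0$-$1$ to norm-$1$ while keeping an exact handle on the optimum.

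The cube rescaling is the affine bijection $t\mapsto 2t-1$ from $\{0,1\}^m$ onto $\{-1,1\}^m$: substituting $w=\tfrac12(z+e_m)$ rewrites $\inprod{w,Mx}$ as $\tfrac12\inprod{z,Mx}+\tfrac12\inprod{M^Te_m,x}$, whose second term is linear in $x$. I would fold this leftover term into a bilinear form by letting $A'$ be $M$ with one extra row $(M^Te_m)^T$ adjoined, guarded by a fresh norm-$1$ coordinate $z_0$; then $\max_{z}\inprod{z,A'x}=\norm{Mx}_1+|\inprod{M^Te_m,x}|$, the absolute value appearing because $z_0$ chooses its own sign. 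A short sign case-analysis, using that $\tfrac12\bigl(\norm{Mx}_1+\inprod{e_m,Mx}\bigr)=\sum_i\max(0,(Mx)_i)=\max_{w\in\{0,1\}^m}\inprod{w,Mx}$, then shows $\max_{x\in\{0,1\}^n}\bigl(\norm{Mx}_1+|\inprod{M^Te_m,x}|\bigr)=2\norm{M}_{\cut}$, i.e. $\text{FGL}(A')=2\norm{M}_{\cut}$ exactly. This exactness is the crux: without the extra row one only obtains the constant-factor sandwich $\norm{M}_{\cut}\le\text{FGL}(M)\le 2\norm{M}_{\cut}$, which does not suffice, since $\MAXSNP$-hardness needs the approximation \emph{gap}, not merely the optimum up to a constant, to transfer.

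It then remains to check $M\mapsto A'$ is an $L$-reduction. The identity $\text{OPT}_{\text{FGL}}(A')=2\,\text{OPT}_{\cut}(M)$ gives the linear bound on optima, and from any $x\in\{0,1\}^n$ (the $z$-part of a solution is irrelevant) attaining FGL-value $v$ one reads off the subsets $S=\{j:x_j=1\}$ and $T=\{i:\epsilon(Mx)_i>0\}$, where $\epsilon$ is the sign of $\inprod{M^Te_m,x}$, obtaining a cut-norm solution of value at least $v/2$; hence a solution within additive $\delta$ of $\text{OPT}_{\text{FGL}}$ yields one within $\delta/2$ of $\text{OPT}_{\cut}$. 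Composing with the reduction of \cite{cut-norm} establishes the theorem. A variant performs the rescaling on the $x$-side via $x=2y-e_n$, absorbing the leftover $\inprod{z,Me_n}$ into an extra column and reducing the mixed-norm problem $\norm{M}_{\infty\rightarrow 1}$ instead; there the auxiliary coordinate is a $0$-$1$ ``anchor'' that the maximizer may set to $0$, collapsing the rescaling, so it must be weighted heavily enough to be forced yet kept $\Theta(\OPT)$ in magnitude (which the polynomial-time Grothendieck $\SDP$ bound of \cref{eq:sdp_nm1} can certify). This anchor-pinning is the one place where real care is needed, and it is why the cut-norm route, where the auxiliary coordinate lives in a norm-$1$-cube and selects its sign automatically, is the cleaner one to carry out.
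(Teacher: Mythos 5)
Your proposal is correct and is essentially the paper's own reduction: rescale the $0$-$1$ side of the cut-norm bilinear form to a norm-$1$ cube, absorb the linear leftover by adjoining the row $e_m^TM$ guarded by one extra $\pm1$ coordinate, and realize the resulting matrix as a two-layer $\relu$ network with all-ones output weights so that its $\ell_\infty$-FGL equals twice the cut norm. If anything, your sign analysis (showing the value is $\norm{Mx}_1+|\inprod{M^Te_m,x}|$, whose maximum is $2\max(\cut^+(M),\cut^+(-M))$, the cut norm with absolute value) and the explicit solution-mapping for the $L$-reduction are slightly more careful than the paper's brief negation argument, but the route is the same.
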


%Let $A\in \R^{m\times n}$, and $e = (1,\ldots, 1)\in \R^n$. The mixed norm of $A$ can be obtained via the following integer program:
%\[
%    \max_{x\in \{-1, 1\}^n, y\in \{-1, 1\}^m} \inprod{Ax, y}
%\]
%Let $t = (x+e)/2$, then $x = 2t-e$. The mixed norm becomes
%\begin{align}\label{eq:mixed-norm-orig}
%\begin{split}
%    &\max_{x\in \{-1, 1\}^n, y\in \{-1, 1\}^m} \inprod{Ax, y} \\= &\max_{t\in \{0, 1\}^n, y\in \{-1, 1\}^m} y^TA(2t-e).
%\end{split}
%\end{align}
%Now introduce a new variable $\tau\in \{0,1\}$, and consider 
%\begin{equation}\label{eq:mixed-norm-new}
%\max_{(t,\tau)\in \{0, 1\}^{n+1}, y\in \{-1, 1\}^m} y^TA(2t-\tau e).
%\end{equation}

%Let $OPT_1$ be the optimal value of~\cref{eq:mixed-norm-orig} and $OPT_2$ be the optimal value of~\cref{eq:mixed-norm-new}. We will show that $OPT_1 = OPT_2$. It is clear that $OPT_2\geq OPT_1$. If $(\hat{t}, 0, \hat{y})$ is the optimal solution to~\cref{eq:mixed-norm-new}, then  

\paragraph{From 0-1 cube to norm-1 cube} If we can transform the 0-1 cube in~\cref{eq:two_layer_obj} to a norm-1 cube, and formulate an equivalent optimization problem, then one can apply the SDP program in~\cref{eq:sdp_mnp1} to compute an upper bound of the FGL. Indeed, we provide a cube rescaling technique, and it allows us to construct the SDP for the $\ell_\infty$-FGL estimation. We provide the full detail of this technique in~\cref{sec:rescale}, and the result SDP for the $\ell_\infty$-FGL estimation is
\begin{align}\label{eq:sdp_inf}
\begin{split}
    \max \;\frac{1}{2}&\trace(BX)\\ 
    s.t.\;  X \succeq 0, X_{ii}=1&, i\in [n+m+1],
\end{split}
\end{align}
where $B$ is a $(n+1+m)\times (n+1+m)$ matrix, and 
$B = \begin{pmatrix}
0 & 0 & 0\\
A & Ae_n & 0
\end{pmatrix}$. As a result, we have: % the following theorem:
\begin{theorem}\label{thm:infty-FGL}
There exists a polynomial-time approximation algorithm to estimate the $\ell_\infty$-FGL of two-layer neural networks, moreover, the approximation ratio is $K_G$.
\end{theorem}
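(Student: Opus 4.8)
The plan is to realize the $\ell_\infty$-FGL estimation of a two-layer ReLU network as a single $\infty\rightarrow 1$ mixed-norm instance and then transfer to it the Grothendieck-inequality guarantee for the SDP relaxation of the mixed-norm problem. First I would chain together the reductions already set up: for a two-layer ReLU network the $\ell_\infty$-FGL equals $\max_{x\in\{0,1\}^n}\norm{Ax}_1$ with $A=W^T\ACTL(u)$; by \cref{eq:shift} this is $\tfrac12\,OPT_1$, and by the lemma above $OPT_1=OPT_2=\max_{(t,y,\tau)\in\{-1,1\}^{n+m+1}}y^TA(t+\tau e_n)$. Writing $z=(t,\tau)$ and (overloading $B$ as in the text) $B=\begin{pmatrix}A & Ae_n\end{pmatrix}\in\R^{m\times(n+1)}$, this becomes $2\cdot\mathrm{FGL}=\max_{(z,y)\in\{-1,1\}^{(n+1)+m}}\inprod{Bz,y}$. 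Since $\norm{v}_1=\max_{\norm{u}_\infty=1}\inprod{u,v}$ and $\norm{Bz}_1$ is convex in $z$ so its maximum over $\norm{z}_\infty\le 1$ is attained at a vertex, this last quantity is exactly $\norm{B}_{\infty\rightarrow 1}$. Hence $2\cdot\mathrm{FGL}=\norm{B}_{\infty\rightarrow 1}$, and it suffices to approximate $\norm{B}_{\infty\rightarrow 1}$ within a factor $K_G$ in polynomial time.

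Next I would identify \cref{eq:sdp_inf} as precisely the SDP relaxation \cref{eq:sdp_nm1} of the mixed-norm problem applied to the matrix $\begin{pmatrix}A & Ae_n\end{pmatrix}$. On one side, for any $z\in\{-1,1\}^{n+1+m}$ the rank-one matrix $X=z^Tz$ is feasible and attains the integer objective, so the optimal SDP value $\obju$ satisfies $\obju\ge\norm{B}_{\infty\rightarrow 1}=2\cdot\mathrm{FGL}$. On the other side, any feasible $X\succeq 0$ with unit diagonal is the Gram matrix of unit vectors in a Hilbert space $H$; splitting these vectors according to the two coordinate blocks (the $z$-block of size $n+1$ and the $y$-block of size $m$) rewrites $\trace(BX)$ as the bilinear form $\sum_{i,j}B_{ij}\inprod{u_i,v_j}_H$ over $u_i,v_j\in B(H)$, and conversely every choice of such unit vectors yields a feasible $X$. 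By Grothendieck's inequality (the theorem above, applied to the matrix $B$) this supremum is at most $K_G\norm{B}_{\infty\rightarrow 1}$, so $\obju\le K_G\norm{B}_{\infty\rightarrow 1}=2K_G\cdot\mathrm{FGL}$. Therefore $\tfrac12\obju\in[\mathrm{FGL},\,K_G\cdot\mathrm{FGL}]$: half the optimal SDP value is an upper bound on the FGL constant that is off by at most the factor $K_G$.

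Finally, the semidefinite program \cref{eq:sdp_inf} has $O((n+m)^2)$ variables, $n+1+m$ linear constraints and a single PSD constraint, so it can be solved to any prescribed additive accuracy $\varepsilon$ in time polynomial in the input size and $\log(1/\varepsilon)$ (for instance by the ellipsoid method or an interior-point method); choosing $\varepsilon$ small enough keeps the overall ratio at $K_G$ (with $K_G<1.783$), which gives the claimed algorithm. The main obstacle is not the optimization but the bookkeeping in the second step: one must verify that the auxiliary variable $\tau$ introduced in \cref{eq:extra_var}, the block structure of $B=\begin{pmatrix}A & Ae_n\end{pmatrix}$, and the symmetrization used in passing from \cref{eq:two_layer_obj} to the mixed-norm form all line up, so that the quantity computed by \cref{eq:sdp_inf} is literally the semidefinite relaxation to which Grothendieck's inequality applies; and the ``shift-and-rescale by $\tau e_n$'' step must introduce no slack of its own, which is exactly the content of the lemma $OPT_1=OPT_2$.
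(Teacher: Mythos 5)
Your proposal is correct and follows essentially the same route as the paper: the cube rescaling $x_i=(t_i+1)/2$ together with the auxiliary variable $\tau$ (the lemma $OPT_1=OPT_2$) turns the FGL into the $\infty\rightarrow 1$ mixed norm of $\begin{pmatrix}A & Ae_n\end{pmatrix}$, and the SDP in \cref{eq:inf_sol} is exactly the natural relaxation whose $K_G$ guarantee comes from Grothendieck's inequality. You merely spell out details the paper leaves implicit (the Gram-vector reading of a feasible $X$, the rank-one lower bound, and polynomial-time solvability of the SDP), which is a faithful expansion rather than a different argument.
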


\subsection{$\ell_2$-FGL estimation}\label{sec:nat_l2}
\citet{lipreg} showed that the $\ell_2$-FGL estimation is $\NP$-hard. If $q=2$ in~\cref{eq:two_layer_obj}, the objective becomes similar to the $\infty\rightarrow 2$ mixed-norm problem. %In fact, Grothendieck proved that $K_G\geq \frac{\pi}{2}$ when $A$ is PSD and $u = v$~\citep{GroIneq,Lindenstrauss1968,cut-norm}. 
This is a quadratic optimization problem with a PSD weight matrix over a cube, and can be viewed as a generalization of the graph Max-Cut problem. In the quadratic optimization formulation of Max-Cut, the weight matrix is the Laplacian of the graph, a special PSD matrix~\citep{maxcut}. \citet{nest} generalized Goemans-Williamson's technique and analyzed the case when the weight matrix is PSD, showing that the natural SDP relaxation in this case has a $\frac{\pi}{2}$-approximation ratio. This provides a $\sqrt{\frac{\pi}{2}}$-approximation algorithm for the $\ell_2$-FGL estimation. The approximation ratio comes from a similar inequality to the one in~\cref{thm:groth-local}, known as the \emph{Little Grothendieck Inequality}.
The SDP for $\ell_2$-FGL estimation is:
\begin{align}\label{eq:l2-SDP}
\begin{split}
   \max \frac{1}{2}&\sqrt{\trace(\begin{pmatrix}
A^TA & A^TAe_n\\
e_n^TA^TA & e_n^TA^TAe_n
\end{pmatrix} X)}\\
    s.t.\; & X \succeq 0, X_{ii}=1, i\in [n+1].
\end{split}
\end{align}
%where $\hat{M} = \begin{pmatrix}
%A^TA & A^TAe_n\\
%e_n^TA^TA & e_n^TA^TAe_n
%\end{pmatrix}$. 
The full derivation is provided in~\cref{sec:l2fgl}, and we have the following theorem:

\begin{theorem}\label{thm:2-FGL}
There exists a polynomial-time approximation algorithm to estimate the $\ell_2$-FGL of two-layer neural networks with an approximation factor $\sqrt{\frac{\pi}{2}}$.
\end{theorem}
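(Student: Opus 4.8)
The plan is to chain the reductions already established in this subsection and then quote Nesterov's bound, paying attention to how a multiplicative guarantee on the quadratic form translates into one on the norm. First I would record that, by \cref{eq:l2_qp}, the squared $\ell_2$-FGL equals $\max_{y\in\{0,1\}^n} y^T M y$ with $M=A^TA\succeq 0$, and by the scaling identity \cref{eq:l2_shift} this equals $\tfrac14\max_{x\in\{-1,1\}^{n+1}} x^T\hat M x$ with $\hat M=(A,Ae_n)^T(A,Ae_n)\succeq 0$. Write $\OPT_{\mathrm{QP}}=\max_{x\in\{-1,1\}^{n+1}} x^T\hat M x$ and $\SDP$ for the optimal value of the relaxation \cref{eq:nat-2}. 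Being a relaxation, $\OPT_{\mathrm{QP}}\le\SDP$; and since $\hat M$ is PSD, Nesterov's theorem gives $\SDP\le\tfrac{\pi}{2}\OPT_{\mathrm{QP}}$. The program \cref{eq:nat-2} has $O((n+1)^2)$ variables and $n+1$ linear constraints, so it is solvable to any fixed accuracy in polynomial time by interior-point methods.

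The main step is to push the guarantee through the square root. The true FGL is $\tfrac12\sqrt{\OPT_{\mathrm{QP}}}$ and the algorithm reports $\tfrac12\sqrt{\SDP}$; applying the monotone map $\sqrt{\cdot}$ to $\OPT_{\mathrm{QP}}\le\SDP\le\tfrac{\pi}{2}\OPT_{\mathrm{QP}}$ and multiplying by $\tfrac12$ yields
\[
\text{FGL}\;\le\;\tfrac12\sqrt{\SDP}\;\le\;\sqrt{\tfrac{\pi}{2}}\,\text{FGL},
\]
so $\tfrac12\sqrt{\SDP}$ is an upper bound on the $\ell_2$-FGL with approximation ratio $\sqrt{\pi/2}$, which is exactly the claimed algorithm (useful on its own for certification, since it is a provable overestimate). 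If one additionally wants a feasible activation pattern witnessing near-optimality, I would run Nesterov's random-hyperplane rounding on an optimal SDP solution: it returns $x\in\{-1,1\}^{n+1}$ with $\expt[x^T\hat M x]\ge\tfrac{2}{\pi}\SDP\ge\tfrac{2}{\pi}\OPT_{\mathrm{QP}}$, and mapping back through \cref{eq:l2_shift} gives $y\in\{0,1\}^n$ whose objective $\|Ay\|_2$ is within $\sqrt{\pi/2}$ of the optimum.

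The only delicate point — and it is a mild one — is ensuring the PSD structure survives the affine shift $y_i\mapsto(t_i+1)/2$, which introduces the bordering row/column turning $M$ into $\hat M$; as noted in the text, $\hat M=(A,Ae_n)^T(A,Ae_n)$ is still a Gram matrix and hence PSD, so Nesterov's hypothesis is met and no further reformulation of \cref{eq:nat-2} is required. Everything else is bookkeeping, with the single non-trivial analytic ingredient, the $\tfrac{2}{\pi}$/$\tfrac{\pi}{2}$ bound, quoted from \citet{nest}.
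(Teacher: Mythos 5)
Your proposal is correct and follows essentially the same route as the paper: chain \cref{eq:l2_qp} and \cref{eq:l2_shift} to reduce the squared FGL to a PSD quadratic form over the norm-1 cube, relax via \cref{eq:nat-2}, invoke Nesterov's $\tfrac{\pi}{2}$ bound, and take square roots to obtain the $\sqrt{\pi/2}$ factor for the reported value $\tfrac12\sqrt{\trace(\hat M X)}$. You simply spell out the sandwiching and rounding details that the paper leaves implicit.
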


\begin{remark}\label{rm:mixed-norm}
As we have discussed, for two-layer networks, the $\ell_p$-FGL estimation is essentially the $\infty\rightarrow q$ mixed-norm problem. Indeed the mixed-norm problem is an outstanding topic in theoretical computer science. As discussed in~\citet{pqnorm}, the $\infty\rightarrow q$ mixed norm problem has constant approximation algorithms if $q\leq 2$, and is hard to approximate within almost polynomial factors when $q>2$. Because when $q>2$, its Hölder conjugate $p<2$. This implies that for two-layer networks, the FGL estimation can be much harder for $\ell_p$-perturbations when $p<2$. 

\citet{opt_inf2} showed that it is $\NP$-hard to approximate the $\infty\rightarrow 2$ mixed-norm problem better than $\sqrt{\frac{\pi}{2}}$. \citet{UGC_inf1_opt} proved that assuming the unique games conjecture~\citep{UGC}, it is $\NP$-hard to approximate the $\infty\rightarrow 1$ mixed-norm problem better than $K_G$. These optimal approximation ratios match our SDP relaxations for FGL estimations accordingly.
\end{remark}

 %\cref{tb:existing-SDP} presents existing SDP works and their scope of application.

\section{Relations to existing SDP works}\label{sec:dual}
Before introducing our approach for multi-layer networks, we first examine some existing SDP works on FGL estimations, and discuss their relationships with our natural SDP relaxations in~\cref{sec:two-layer}.
\paragraph{$\ell_\infty$-FGL estimation}
\citet{certified_def} formulated an SDP that only works for two-layer networks. %If we compare their formulation with ours, theirs is essentially
%\begin{align*}
%\begin{split}
%    \max &\frac{1}{4}\trace\Big(\begin{pmatrix}
%0 & 0 & A^T\\
%0 & 0 & e_n^T A^T\\
%A & Ae_n & 0
%\end{pmatrix}\cdot X\Big)\\ 
%    s.t.\;  &X \succeq 0, X_{ii}=1, i\in[n+m+1].
%\end{split}
%\end{align*}
%where $C$ is a $(m+n+1)\times (m+n+1)$ matrix. %, and 
%$C = \begin{pmatrix}
%0 & 0 & A^T\\
%0 & 0 & e_n^T A^T\\
%A & Ae_n & 0
%\end{pmatrix}$. 
%This 
Theirs is essentially the same as ours in~\cref{eq:sdp_inf} (See the detailed comparison in~\cref{sec:compar}).
However, we provide a rigorous derivation and simpler formulation, and also a sound theoretical analysis of the bound, which illustrate more insights to this problem. \citet{certified_def} treated the SDP relaxation as a heuristic to a hard quadratic
programming problem. We prove that this relaxation is not only a heuristic, but in fact induces an approximation algorithm with a tight bound.

%\begin{wraptable}{r}{7.9cm}
%\vskip -0.2in
%\caption{Existing SDP relaxations for %$\ell_p$-FGL.}\label{tb:existing-SDP}
%\vskip -0.1in
%\begin{tabular}{ccc}\\\toprule  
%& $\ell_2$-FGL & $\ell_\infty$-FGL \\
%\midrule
%Two-Layer   & LipSDP& \citet{certified_def}\\
%Multi-Layer  &  LipSDP & Not Available\\
%\bottomrule
%\end{tabular}
%\vskip -0.1in
%\end{wraptable} 
%\begin{table}[t]
%\caption{Existing SDP relaxations for $\ell_p$-FGL.}\label{tb:existing-SDP}
%\label{sample-table}
%\vskip -0.5in
%\begin{center}
%\begin{small}
%\begin{sc}
%\begin{tabular}{lccccr}
%\toprule
%& $\ell_2$-FGL & $\ell_\infty$-FGL \\
%\midrule
%Two-Layer Network   & LipSDP-Neuron & \citet{certified_def}\\
%Multi-Layer Network   &  LipSDP-Neuron & Not Available\\
%\bottomrule
%\end{tabular}
%\end{sc}
%\end{small}
%\end{center}
%\vskip -0.2in
%\end{table}

\paragraph{$\ell_2$-FGL estimation}%\label{sec:dual}
%\paragraph{LipSDP for $\ell_2$-FGL estimation}
\citet{lipsdp} proposed LipSDP, another SDP-based algorithm for the $\ell_2$-FGL estimation problem. \citet{lipsdp} provided several variants of LipSDP to balance the precision and scalability. \citet{anti_lipSDP} demonstrated that the most precise version of LipSDP, \emph{LipSDP-Network}, fails to produce an upper bound for $\ell_2$-FGL. %Here we will study the less precise version, LiPSDP-Neuron, and 
In this paper, all the references of LipSDP are to \emph{LipSDP-Neuron}, the less precise version.
Surprisingly, even though the approach in LipSDP appears quite different from~\cref{eq:l2-SDP}, % the mixed-norm formulation, %the empirical evaluation shows that the results are exactly the same. 
we show that LipSDP is dual of~\cref{eq:l2-SDP} to estimate the $\ell_2$-FGL on two-layer networks.
LipSDP for two-layer networks is: %for the network is
\[
\min_{\obju,\dualv}\Big\{\sqrt{\obju}:\begin{pmatrix}
-2ab W^TWT-\obju I_m & (a+b)W^T T \\
(a+b) T W & -2T+u^Tu
\end{pmatrix}\preceq 0, \dualv_i\geq 0\Big\},
\]
where $T = \ACTL(\dualv)$ for $\dualv\in \R_+^n$; $a$ and $b$ are the lower and upper bounds of the activation's derivative. %This SDP program appears very different from~\cref{eq:l2-SDP}, but we will show they are equivalent.

%Let us assume that $\act'(x)\in [a,b]$. Because the gradient is $W^T\ACTL(Y) u = W^T\ACTL(u) Y$, let $M =W^T\ACTL(u)$, and the square of the $\ell_2$-norm is 

%\paragraph{A New Optimization Program} 
We will construct a new quadratic program, which we show is equivalent to~\cref{eq:two_layer_obj} when $q=2$, and LipSDP is its dual SDP relaxation.

Let the input of the $i$-th activation node on $\ACTL(y)$ be $y_i$, and $w_i$ be the row vector of $W$. Hence, $y_i = w_ix$. Let $\Delta x\in \R^m$ be a perturbation on $x$, so $\Delta y_i = w_i \Delta x$. Let
$\Delta \act(y)\in \R^n$ denote the induced perturbation on $\ACTL(y)$.
The \textbf{\emph{constraint}} from the activation function is $\frac{\Delta \act(y)_i}{\Delta y_i}\in [a,b]$, in other words, $\frac{\Delta \act(y)_i}{w_i \Delta x}\in [a,b]$. 
One can write the range constraint as 
\[
    (\Delta \act(y)_i - a\cdot w_i \Delta x)(\Delta \act(y)_i - b\cdot w_i \Delta x) \leq 0.
\]
This can be written in the quadratic form:
\begin{equation}\label{eq:lipsdp_c}
\begin{pmatrix}
w_i \Delta x\\
\Delta \act(y)_i
\end{pmatrix}^T \begin{pmatrix}
-2ab & a+b \\
a+b & -2 
\end{pmatrix}\begin{pmatrix}
w_i \Delta x\\
\Delta \act(y)_i
\end{pmatrix}\geq 0,\;\;\;\;\forall i \in [n].
\end{equation}
Since for the two layer network, $f(x) = u \act(y)$, then $\Delta f(x) = u\Delta \act(y)$. The \textbf{\emph{objective}} for $\ell_2$-FGL estimation is $\max_{\Delta x,\Delta \act(y)} \sqrt{\frac{(u\Delta \act(y))^2}{(\Delta x)^2}}$. The equivalence between this program and~\cref{eq:two_layer_obj} when $q=2$ is presented in~\cref{sec:equiv}.

\begin{remark}
Another interpretation for the quadratic program is that we want to quantify how the output changes given a data-independent input change, i.e., $\Delta x$. In other words, we want to analyze the effect of $\Delta x$ propagating from the input to the output, with symbolic values rather than actual inputs. The idea is similar to symbolic execution from program analysis~\citep{SurveySymExec}.
\end{remark}

\paragraph{Duality to LipSDP} 
Now we will show that LipSDP is the dual SDP to the program formulated above. The dual SDP derivation is of similar form in~\citet[Ch.4.3.1]{modern_co}. Let us introduce a variable $\obju$ such that $\obju - \frac{(u\Delta \act(y))^2}{(\Delta x)^2}\geq 0$.
In other words,
\begin{equation}\label{eq:newopt_obj}
\obju (\Delta x)^2 - (u\Delta \act(y))^2\geq 0.
\end{equation}
For each constraint in~\cref{eq:lipsdp_c}, let us introduce a dual variable $\dualv_i\geq 0$. Multiply each constraint with $\dualv_i$, then
$
\begin{pmatrix}
\Delta x\\
\Delta \act(y)_i
\end{pmatrix}^T \begin{pmatrix}
-2ab \dualv_i w_i^Tw_i & (a+b)\dualv_iw_i^T  \\
(a+b)\dualv_i w_i & -2\dualv_i 
\end{pmatrix}\begin{pmatrix}
\Delta x\\
\Delta \act(y)_i
\end{pmatrix}\geq 0,\;\;\;\;\forall i \in [n].
$

Sum all of them, then we have 
$\begin{pmatrix}
\Delta x\\
\Delta \act(y)
\end{pmatrix}^T \begin{pmatrix}
-2ab W^TW T & (a+b)W^T T \\
(a+b) TW & -2T
\end{pmatrix}\begin{pmatrix}
\Delta x\\
\Delta \act(y)
\end{pmatrix}\geq 0$,
where $T = \ACTL(\dualv)$ is the $n\times n$ diagonal matrix of dual variables $\dualv_1,\ldots,\dualv_n$.

\cref{eq:newopt_obj} can be rewritten as:
$
\begin{pmatrix}
\Delta x\\
\Delta \act(y)
\end{pmatrix}^T \begin{pmatrix}
\obju I_{m} & 0 \\
0 & -u^Tu
\end{pmatrix}\begin{pmatrix}
\Delta x\\
\Delta \act(y)
\end{pmatrix}\geq 0.
$
As a result, the dual program for the new optimization program is
\[
\min_{\obju,\dualv}\Big\{\sqrt{\obju}:\begin{pmatrix}
-2ab W^TWT-\obju I_m & (a+b)W^T T \\
(a+b) T W & -2T+u^Tu
\end{pmatrix}\preceq 0, \dualv_i\geq 0\Big\}.
\]
%which is exactly LipSDP. 
%
\begin{remark}
In~\cref{rm:mixed-norm}, we mention that $\sqrt{\frac{\pi}{2}}$ is the optimal approximation ratio for the $\infty\rightarrow 2$ mixed-norm problem, which matches the approximation ratio in~\cref{thm:2-FGL}. Hence, improving the natural SDP relaxation in~\cref{eq:l2-SDP} can be very hard. The duality provides another evidence of LipSDP-Neuron's correctness, and hints that LipSDP-Network, the improved variant, may be wrong.
\end{remark}
\section{$\ell_\infty$-FGL estimation for multi-layer networks}\label{sec:dual_infty}
For a multi-layer neural network, the formal gradient becomes a high-degree polynomial, and its $\ell_q$-norm estimation becomes a high-degree polynomial optimization problem over a cube, which is in general a hard problem~\citep{lasserre_2015}. We provide a discussion of the polynomial optimization approach of FGL estimation in~\cref{sec:poly}. %Here, we provide an SDP dual program of $\ell_\infty$-FGL estimation inspired by the dual optimization approach in~\cref{sec:dual}.  %However, \citet{lipsdp} devised a compact matrix representation of multi-layer neural networks so that there is no need to handle high-degree polynomials. We will show that this compact matrix representation of neural network has a natural dual program interpretation, which allows us to transfer the technique to the $\ell_\infty$-FGL estimation.
%
%We will first give an SDP dual program of $\ell_\infty$-FGL estimation on two-layer networks. The multi-layer case follows easily. In the end, we will briefly discuss the gradient-polynomial approach to multi-layer network FGL estimations.
%
%\subsection{Dual for $\ell_\infty$-FGL estimation}\label{sec:dual_inf}
%We have shown the duality between LipSDP and the natural SDP relaxation on $\ell_2$-FGL estimation for two layer networks. % It would be interesting to formulate a similar treatment on $\ell_1$-FGL, so that we can bypass both the relaxations proposed above.
%
Here we provide an SDP dual program of the $\ell_\infty$-FGL estimation inspired by the dual SDP approach in~\cref{sec:dual}. The difference is that now we consider $\ell_\infty$-perturbations to the input space instead of $\ell_2$. Hence, the objective becomes
\[
    \max_{\Delta x, \Delta \act(y)}\frac{|u\Delta \act(y)|}{\norm{\Delta x}_\infty}.
\]

%Because $u\in \R^{1\times n}$, $u\Delta \act(y)$ is a scalar, and so $|u\Delta \act(y)| = \sqrt{(u\Delta \act(y))^2}$
If we add an extra constraint $\norm{\Delta x}_\infty = 1$, the above objective becomes
\begin{equation}\label{eq:new_dual_obj}
    \max_{\Delta x, \Delta \act(y)} \frac{1}{2}(u\Delta \act(y)+ u\Delta \act(y)).
\end{equation}
%We have $u\Delta \act(y)+ u\Delta \act(y)$, because $u\Delta \act(y)$ is a scalar, and we need a symmetric matrix representation in the SDP program.
The constraints are
\begin{equation*}
\begin{pmatrix}
w_i \Delta x\\
\Delta \act(y)_i
\end{pmatrix}^T \begin{pmatrix}
2ab & -(a+b) \\
-(a+b) & 2 
\end{pmatrix}\begin{pmatrix}
w_i \Delta x\\
\Delta \act(y)_i
\end{pmatrix}\leq 0,\; \Delta(x)_j^2 \leq 1,\;\forall i \in [n],j\in [m].
\end{equation*}

We can write $\Delta(x)_j^2 \leq 1$ as
$\begin{pmatrix}
1\\
\Delta x_j
\end{pmatrix}^T \begin{pmatrix}
1 & 0 \\
0 & -1 
\end{pmatrix}\begin{pmatrix}
1\\
\Delta x_j
\end{pmatrix}\geq 0$.

Now let us introduce $n+m$ non-negative dual variables $(\tau, \dualv)$, where $\tau\in \R^m_+$ and $\dualv\in \R^n_+$. If we multiply each dual variable with the constraint and add all the constraints together, we will have
\[
\begin{pmatrix}
1\\
\Delta x\\
\Delta \act(y)
\end{pmatrix}^T \begin{pmatrix}
\sum_{j=1}^m \tau_j & 0 & 0\\
0 & -2ab W^TW T_2 - T_1 & (a+b)W^T T_2\\
0 & (a+b) T_2 W & -2T_2
\end{pmatrix}\begin{pmatrix}
1\\
\Delta x\\
\Delta \act(y)
\end{pmatrix}\geq 0,
\]
where $T_1 = \ACTL(\tau)$ and $T_2 = \ACTL(\dualv)$.
As a result, we can incorporate the objective~\cref{eq:new_dual_obj} and  obtain the dual SDP for the $\ell_\infty$-FGL estimation:
\begin{equation}\label{eq:dual_inf}
\min_{\obju,\dualv, \tau}\Big\{\frac{\obju}{2}: \begin{pmatrix}
\sum_{j=1}^m \tau_j-\obju& 0 & u\\
0 & -2ab W^TW T_2 - T_1 & (a+b)W^T T_2\\
u^T & (a+b) T_2 W & -2T_2
\end{pmatrix}\preceq 0, \dualv_i, \tau_j\geq 0\Big\}.   
\end{equation}

\begin{remark}
The SDP programs in~\cref{sec:two-layer} are strictly feasible because the identity matrix is a positive definite solution. Hence, Slater's condition is satisfied and strong duality holds. 
\end{remark}
\paragraph{Multi-layer extension} 
We can simply extend the dual program to multiple-layer networks. We first vectorize all the units in the input layer and hidden layers, and then constrain them using layer-wise inequalities to formulate an optimization problem.
Let us consider a general $d$-layer multi-layer network, where $W^i\in \R^{n_{i+1}\times n_{i}}$ for $i\in [d-1]$, and $W^{d} = u\in \R^{1\times n_d}$. 
Let $\Delta x$ denote the perturbation on the input layer, $\Delta z^i$ be the perturbation on the $i$-th hidden layer, and $w^i_j$ be the $j$-th row vector of $W^i$. The only difference between two layer networks and multi-layer networks is that we have the additional constraints:
%\begin{equation}\label{eq:layer1}
%\begin{pmatrix}
%\Delta x\\
%\Delta z^1_k
%\end{pmatrix}^T \begin{pmatrix}
%-2ab (w^1_k)^Tw^1_k & (a+b)(w^1_k)^T \\
%(a+b) w^1_k & -2
%\end{pmatrix}\begin{pmatrix}
%\Delta x\\
%\Delta z^1_k
%\end{pmatrix}\geq 0,
%\end{equation}
\[\begin{pmatrix}
\Delta z^i\\
\Delta z^{i+1}_j
\end{pmatrix}^T \begin{pmatrix}
-2ab (w^{i+1}_j)^Tw^{i+1}_j & (a+b)(w^{i+1}_j)^T \\
(a+b) w^{i+1}_j & -2
\end{pmatrix}\begin{pmatrix}
\Delta z^i\\
\Delta z^{i+1}_j
\end{pmatrix}\geq 0.  
\]
%and also the box constraint on $\Delta x$:
%$\begin{pmatrix}
%1\\
%\Delta x_l
%\end{pmatrix}^T \begin{pmatrix}
%1 & 0 \\
%0 & -1 
%\end{pmatrix}\begin{pmatrix}
%1\\
%\Delta x_l
%\end{pmatrix}\geq 0$. 

%The objective is: $
%    \max \frac{1}{2}(u\cdot \Delta z^{d-1} + u\cdot \Delta z^{d-1} )$.
Let $\dualvec_i\in \R^{n_{i}}_+$ and $T_i = \ACTL(\dualvec_i)$ for $i\in[d]$. Following the similar SDP dual approach, we can add all the constraints together and formulate the following SDP program:
\begin{equation}\label{eq:geolip}
   \min_{\obju, \dualvec_{i}}\Big\{\frac{\obju}{2}: (L+N)\preceq 0, i\in[d] \Big\},  
\end{equation}
where 
\begin{equation}\label{eq:contraintL}
L = \begin{pmatrix}
0& 0 & 0 & \dots&0 & 0\\
0 & -2ab (W^1)^T W^1 T_2 & (a+b)(W^1)^T T_2 &\dots&0 & 0\\
0&(a+b) T_2 W^1 & -2T_2-2ab (W^2)^TW^2T_3&\dots&0 & 0\\
\vdots&\vdots & \vdots&\ddots & \vdots & \vdots\\
%0&0 & 0&0 & -2abW_d^TW_d T_d - 2T_{d-1} & (a+b)W_d^T T_d\\
0&0 &0&\dots & (a+b)T_dW^{d-1} & -2T_d\\
\end{pmatrix},
\end{equation}
\[
N = \begin{pmatrix}
\sum_{k=1}^{n_1} \dualvec_{1k}-\obju& 0 & \dots & u\\
0 & -T_1 & \dots &0\\
\vdots&\vdots & \ddots&\vdots\\
u^T&0 & \cdots&0
\end{pmatrix}.
\]

\begin{remark}
If we expand the matrix inequality derived from the compact neural-network representation in~\citet[Theorem 2]{lipsdp}, we will have exactly the same matrix for network constraints as $L$ (\cref{eq:contraintL}) in the dual program formulation. In other words, we provide a compositional optimization interpretation to the compact neural-network representation in LipSDP. With this interpretation, one can extend the SDP to beyond feed-forward structures, such as skip connections~\citep{resnet}. Notice that if we apply the similar reasoning to the multi-layer network $\ell_2$-FGL estimation, we will obtain LipSDP-Neuron.   %This dual optimization interpretation can benefit future research in the study of neural networks.
\end{remark}

\section{Evaluation and discussion}\label{sec:eval}
The primary goal of our work is to provide a theoretical framework, and also algorithms for $\ell_\infty$-FGL estimations on practically used networks. The $\ell_2$-FGL can be computed using LipSDP. %~\citep{lipsdp}. 
We have implemented the algorithms using MATLAB~\citep{MATLAB}, the CVX toolbox~\citep{cvx} and MOSEK solver~\citep{mosek}, and name the tool GeoLIP. To validate our theory and the applicability of our algorithms, we want to empirically answer the following research questions:
\begin{tcolorbox}
\begin{enumerate}[start=1,label={\bfseries RQ\arabic*:}]
\item Is GeoLIP better than existing methods in terms of precision and scalability?
\item Are the dual SDP programs devised throughout the paper valid?
\end{enumerate}
\end{tcolorbox}
As we shall see, GeoLIP is indeed better than existing methods in terms of precision and scalability; and the dual SDP programs produce the same values as their natural-SDP-relaxation counterparts.

%We will conduct several experiments to address these research questions.
\subsection{Experimental design}\label{sec:ex-design}
To answer \textbf{RQ1}, we will run GeoLIP and existing tools that measure the $\ell_\infty$-FGL on various feed-forward neural networks trained with the MNIST dataset~\citep{mnist}. %The tools and network configurations are specified later. 
We will record the computed $\ell_\infty$-FGL to compare the precision, and the computation time to compare the scalability. 

To answer \textbf{RQ2}, we will run the natural SDP relaxations for $\ell_p$-FGL estimations proposed in~\cref{sec:two-layer}, LipSDP for $\ell_2$-FGL estimation, and the dual program~\cref{eq:dual_inf} for $\ell_\infty$-FGL on two-layer neural networks, and compare their computed FGLs.

\begin{table}[t]
\caption{$\ell_\infty$-FGL estimation of various methods: DGeoLIP and NGeoLIP induce the same values on two layer networks. DGeoLIP always produces tighter estimations than LiPopt and MP do.}\label{tb:selected}
%\label{sample-table}
\begin{center}
\begin{small}
%\begin{sc}
\begin{tabular}{ccccccc}
\toprule
Network&DGeoLIP & NGeoLIP & LiPopt & MP & Sample & BruF \\
\midrule
%2-layer/8 units    & 142.19& 142.19& 180.38& 411.90& 134.76 & 134.76\\
2-layer/16 units   &185.18& 185.18& 259.44 & 578.54 & 175.24 & 175.24\\
%64    & 287.60&287.60& 510.00& 1207.70 & 253.89 & N/A\\
%128    & 346.27&346.27& 780.46& 2004.34 & 266.22 & N/A\\
2-layer/256 units    & 425.04&425.04& 1011.65& 2697.38 & 306.98 & N/A\\
8-layer/64 units per layer &8327.2 &-----& N/A& $8.237*10^7$ & 1130.6 & N/A\\
%512    & 537.80& 1452.41& 4755.23 &  376.06\\
\bottomrule
\end{tabular}
%\end{sc}
\end{small}
\end{center}
\end{table}

\begin{table}
\centering
\caption{Running time (in seconds) of various tools on $\ell_\infty$-FGL estimation: DGeoLIP and NGeoLIP are faster than LiPopt. Notice that the running time is implementation and solver-dependent.}\label{tb:selected-time}
\begin{small}
\begin{tabular}{ccccc}\\\toprule  
Network& DGeoLIP & NGeoLIP & LiPopt & BruF \\\midrule
2-layer/16 units    & 28.1& 22.3& 1572 & 4.8\\  %\midrule
2-layer/256 units    & 976.0& 70.9& 2690 & N/A\\  %\midrule
8-layer/64 units    & 329.5& -----& N/A & N/A\\  \bottomrule
\end{tabular}
\end{small}
\end{table}
\paragraph{Measurements}
Our main baseline tool is \textbf{\emph{LiPopt}}~\citep{lipopt}, which is an $\ell_\infty$-FGL estimation tool.\footnote{Another method was proposed by~\citet{chen2020semialgebraic}, however, the code is not available and we are not able to compare it with GeoLIP.} Notice that LiPopt is based on the Python Gurobi solver~\citep{gurobi}, while we use the MATLAB CVX and MOSEK solver.
LiPopt relies on a linear programming (LP) hierarchy for the polynomial optimization problem. We use LiPopt-k to denote the $k$-th degree of the LP hierarchy.
\textbf{\emph{BruF}} stands for an brute-force exhaustive enumeration of all possible activation patterns. This is the ground truth for FGL estimations. However, this is an exponential-time search, so we can only run it on networks with a few hidden units.
\textbf{\emph{Sample}} means that we randomly sample $200,000$ points in the input space and compute the gradient norm at those points. Notice that this is a lower bound of the true Lipschitz constant, and thus a lower bound of the FGL.
\textbf{\emph{MP}} stands for the weight-matrix-norm-product method. This is a naive upper bound of FGL. We use \textbf{\emph{NGeoLIP}} to denote the natural SDP relaxations devised in~\cref{sec:two-layer}, and \textbf{\emph{DGeoLIP}} to denote the dual SDP~\cref{eq:geolip} for $\ell_\infty$-FGL estimation. Notice that NGeoLIP only applies to two-layer networks.

%\paragraph{SETUP} 
%All the experiments are run on a workstation with forty-eight Intel\textsuperscript{\textregistered} Xeon\textsuperscript{\textregistered} Silver 4214 CPUs running at 2.20GHz, and 258 GB of memory. 
We use ``---'' in the result tables to denote that the experimental setting is not in the scope of the tool's application, and  ``N/A'' to denote the computation takes too much time ($>20$ hours).

%\begin{wraptable}{r}{7.1cm}
%\vskip -0.15in
%\caption{Running time (in seconds) of various tools on $\ell_\infty$-FGL estimation: DGeoLIP and NGeoLIP are faster than LiPopt.}\label{tb:selected-time}
%\begin{small}
%\begin{tabular}{cccc}\\\toprule  
%Network& DGeoLIP & NGeoLIP & LiPopt \\\midrule
%2-layer/8 units    & 23.1& 21.5& 1533\\  %\midrule
%2-layer/256 units    & 976.0& 70.9& 2690\\  %\midrule
%8-layer/64 units    & 329.5& -----& N/A\\  \bottomrule
%\end{tabular}
%\end{small}
%\end{wraptable}

\paragraph{Network setting}We run the experiments on fully-connected feed-forward neural networks, trained with the MNIST dataset for $10$ epochs using the ADAM optimizer~\citep{adam}. All the trained networks have accuracy greater than $92\%$ on the test data. % We will use a list of numbers to denote the network configuration. For example, $(16)$-network denotes a feed-forward neural network with a single hidden layer, and $16$ ReLU units; $(64, 64, 16)$-network denotes a feed-forward neural network with $3$ hidden layers, and $64$, $64$, $16$ ReLU units, in each layer respectively.
For two-layer networks, we use $8$, $16$, $64$, $128$, $256$ hidden nodes. For multiple-layer networks, we consider $3$, $7$, $8$-layer networks, and each hidden layer has $64$ $\relu$ units.
Because MNIST has $10$ classes, we report the estimated FGL with respect to label $8$ as in~\citet{lipopt}, and the average running time per class: we record the total computation time for all 10 classes from each tool, and report the average time per class.  % for the computation.

%However, for degree $3$, each Lipschitz constant estimation is projected to finish after at least $200$ hours for the neural network we are working on. Therefore, we are not able to run the experiments for multiple hidden layer networks. In our experiments, we are only able to run LiPopt-2 for two layer networks.

%\paragraph{Toy Network}We will first consider a toy example: a single hidden layer neural network with $16$ $\relu$ units. Because we only have $16$ hidden units, we can run a brute-force search to find the maximum value of the integer programming problem. The results are summarized in~\cref{tb:toyLC,tb:toyTime}.

\subsection{Discussion}\label{sec:dis}
%In the following, we will discuss how our evaluation addresses the research questions, and their implications. We first address scalability, and then precision. In the end, we discuss duality and the related quantitative geometric perspective .

%\subsection{Scalability}
We present selected results in~\cref{tb:selected,tb:selected-time}, and related major discussions here. The full results, more experimental setup and additional discussions can be found in~\cref{sec:more_eval}.

\paragraph{RQ1}In the experiments of LiPopt, we only used LiPopt-2. In theory, if one can go higher in the LP hierarchy in LiPopt, the result becomes more precise. 
However, in the case of fully-connected neural networks, using degree-3 in LiPopt is already impractical.
For example, on the simplest network that we used, i.e., the single-hidden-layer neural network with $8$ hidden units, using LiPopt-3, one $\ell_\infty$-FGL computation needs at least $200$ hours projected by LiPopt. As a result, for all the LiPopt-related experiments, we were only able to run LiPopt-2.
As~\citet{lipopt} pointed out, the degree has to be at least the depth of the network to compute a valid bound, so we have to use at least LiPopt-k for $k$-layer networks. LiPopt is unable to handle neural networks with more than two layers because this requires LiPopt with degrees beyond 2.
Even if we only consider LiPopt-2 on two-layer networks, the running time is still much higher compared to GeoLIP. % In the meantime, GeoLIP achieves a much lower $\ell_\infty$-FGL estimation on two layer networks. %
This demonstrates the great advantage of GeoLIP in terms of scalability compared with LiPopt.
%
%\paragraph{Sparsity might be a bug, not a feature}\cite{lipopt} argued that LiPopt can exploit the sparsity of neural network, and works better on architectures with higher sparsity. Empirically, the authors examined LiPopt on networks that are trained to encourage sparsity. However, improving sparsity of a neural network may hurt its robustness. \cite{bubeck2021a} showed that for a network to achieve both high accuracy and smoothness, the neural network has to increase its capacity significantly. Increasing sparsity effectively decreases the number of neurons, and therefore may hurt the model's robustness. A thorough investigation of sparsity is necessary when it comes to the robustness of neural networks, and this can be a future work.
%\subsection{Precison}
%\paragraph{Precision of GeoLIP}
If we compare LiPopt-2 with GeoLIP on two-layer networks from~\cref{tb:selected}, it is clear that GeoLIP produces more precise results. For networks with depth greater than $2$, we can only compare GeoLIP with the matrix-norm-product method. As we can see from all experiments, GeoLIP's estimation on the FGL is always much lower than MP. %the matrix-norm product. %, for example, for the $8$-layer network, the FGL measured by the matrix-norm product is around $10000$ times greater than GeoLIP's result. Given that GeoLIP is the only tool that can scale to deep networks other than the matrix-norm product method, this demonstrates the importance of GeoLIP.

We have also shown that the two-layer network $\ell_\infty$-FGL estimation from GeoLIP has a theoretical guarantee with the approximation factor $K_G < 1.783$~(\cref{thm:infty-FGL}). If we compare the two-layer network results from GeoLIP and Sampling in~\cref{tb:selected}, which is a lower bound of true Lipschitz constant, the ratio is within $1.783$. This validates our theoretical claim. 

%More interestingly, we showed that GeoLIP's approximation factor for $\ell_2$-FGL estimation on two layer networks is $\sqrt{\frac{\pi}{2}}\approx 1.253$. The $\ell_2$-FGL is very close to the sampled lower bound of true Lipschitz constant in~\cref{tb:dual2}. On the other hand, because the result from GroLIP is an upper bound of FGL, and this result is not much greater than the sampled lower bound of true Lipschitz constant, this empirically demonstrates that the true Lipschitz constant is not very different from the FGL on two-layer networks. 

\paragraph{RQ2}
In~\cref{sec:dual}, we have demonstrated the duality between NGeoLIP and LipSDP for the $\ell_2$-FGL estimation on two-layer networks, even though the approaches appear drastically different. The experiments show that on two-layer networks, LipSDP and NGeoLIP for $\ell_2$-FGL estimations (\cref{tb:dual2} in the appendix), and DGeoLIP and NGeoLIP for $\ell_\infty$-FGL estimations produce the same values. These results empirically validate the duality arguments, and also all the related SDP programs. %Similarly, the dual program proposed in~\cref{sec:dual_infty} and the natural SDP relaxation of  on two-layer networks generate the same values. These results provide empirical evidence for the correctness of the duality arguments, and the SDP programs themselves. %Though~\citet{anti_lipSDP} showed that the most precise version of LipSDP is invalid for estimating an upper bound of $\ell_2$-FGL, our dual-program argument shows that the less precise version of LipSDP is correct.

%If we compare the running time in~\cref{tb:TLTime,tb:l2Time}, the dual program takes more time to solve than the natural relaxation. This is particularly true when the number of hidden neurons increases. From the reported numbers of variables and equality constraints by CVX, the dual program and natural relaxation have similar numbers. It is also observed that the CPU usage is higher when the natural relaxation is being solved. 
%We want to point out that the running time and optimization algorithm are solver-dependent, and efficiently solving SDP is beyond the scope of this work. It is an interesting future direction to exploit the block structure of the dual programs, and develop algorithms that are compatible with those programs, because training low Lipschitz network is a critical task, and it is promising to incorporate the SDP programs.

%\paragraph{$\ell_2$ versus $\ell_\infty$ FGLs} If we compare results from~\cref{tb:TLLC,tb:dual2}, we can also find that the discrepancy between matrix product method and sampled lower bound is much smaller in the $\ell_2$ case. This could also explain why Gloro works for $\ell_2$-perturbations but not the $\ell_\infty$ case in practice, where~\citet{gloro} used matrix-norm product to compute the Lipschitz constant of the network in Gloro.

\paragraph{SDP relaxation}Applying SDP on intractable combinatorial optimization problem was pioneered by the seminal Goemans-Williamson algorithm for the $\MAXCUT$ problem~\citep{maxcut}. For two-layer networks, we have reduced the FGL estimation to the mixed-norm problem, and provide approximation algorithms with ratios compatible with the known optimal constants in the corresponding mixed-norm problems. Improving them can be a very hard task. We also provide a compositional SDP interpretation of LipSDP-Neuron. Although \citet{anti_lipSDP} demonstrated the flaw in LipSDP-Network, our compositional SDP interpretation shows that LipSDP-Neuron is correct. In fact, from the compositional SDP interpretation, the program is only constrained by the underlying perturbation geometry and the layer-wise restriction from each hidden unit, so the constraints and objective exactly encode the FGL-estimation problem without additional assumptions. Because often the SDP relaxation for intractable problems gives the optimal known algorithms, we conjecture that GeoLIP and LipSDP are also hard to improve on FGL estimations. 

\citet{lipopt} used polynomial optimization to address the $\ell_\infty$-FGL estimation. We argue that approaching FGL-estimations from the perspective of polynomial optimization loses the accurate characterization of this problem. For example, for two-layer networks, we have provided constant approximation algorithms to estimate FGLs in both $\ell_\infty$ and $\ell_2$ cases. However, for a general polynomial optimization problem on a cube, we cannot achieve constant approximation. For example, the maximum independent set of a graph can be encoded as a polynomial optimization problem over a cube~\citep{motzkin_straus_1965}, but the maximum independent set problem cannot be approximated within a constant factor in polynomial time unless $\Pclass = \NP$~\citep{trevisan2004inapproximability}.
\section{Related work}
%We have discussed extensively recent works  on FGL estimations~\citep{lipsdp,lipopt,anti_lipSDP,certified_def}, and compare GeoLIP with them. % In particular,~\citet{certified_def} proposed to incorporate their SDP to train certifiably robust neural networks via the dual program. However, their SDP formulation only works for two-layer networks. Because the SDP programs in our work are already in the dual form, and work for multi-layer networks, it is an immediate future work to train general certifiably robust networks.
%In particular, 
\citet{chen2020semialgebraic} employed polynomial optimization to compute the true Lipschitz constant of ReLU-networks for $\ell_\infty$-perturbations, and used Lasserre's hierarchy of SDPs~\citep{LasserreH} to solve the polynomial optimization problem. However, their approach is highly tailored to ReLU networks, while ours, like LipSDP, can handle common activations, such as sigmoid and ELU.

\citet{lipopt} also proposed to use LiPopt to estimate the local Lipschitz constant. However, estimating this quantity is not the problem studied in our work, and there are tools specifically designed for local perturbations and the Lipschitz constant~\citep{clarkeG,zhang2019recurjac}.

Lipschitz regularization of neural networks is an important task, and recent works~\citep{lipreg1,lipreg2,lipreg3,lipreg4,lipreg5} have investigated this problem. However, here we study a related but different problem, i.e., Lipschitzness measurement of neural networks. Our work can motivate new Lipschitz regularization
techniques.
%The global Lipschitz constant of a neural network is data-independent. In practice, the Lipschitz constant restricted on a specific dataset might have a lower Lipschitz constant, and further improves the certifiable robustness on the dataset. 

%4. Quantitative Geometry 

%\section{Future Work}
%Why different $\ell_p$ smoothness and robustness are different? Quantitative approach instead of existential view.

%How to combine the method to train networks.
\section{Conclusion}
In this work, we have provided a quantitative geometric framework for FGL estimations, and also algorithms for the $\ell_\infty$-FGL estimation. %We hope that this framework can bring new insights and theoretical tools to the study of neural-network smoothness and robustness. 
One important lesson is that when transferring techniques from one perturbations to another ones, we should also transfer the underlying geometry. One future work is to train smooth neural networks using the SDPs proposed in this paper. %In the meantime, we believe that the algorithms from our work can be a crucial step to train smooth neural networks.
\acksection
The authors thank Vijay Bhattiprolu for introducing recent progress on the mixed-norm problems. The work is partially supported by Air Force Grant FA9550-18-1-0166, the National Science Foundation (NSF) Grants CCF-FMitF-1836978, IIS-2008559, SaTC-Frontiers-1804648, CCF-2046710
and CCF-1652140, and ARO grant number W911NF-17-1-0405. Zi Wang and Somesh Jha are
partially supported by the DARPA-GARD problem under agreement number 885000.

\bibliography{paper}
\bibliographystyle{ACM-Reference-Format}
%%%%%%%%%%%%%%%%%%%%%%%%%%%%%%%%%%%%%%%%%%%%%%%%%%%%%%%%%%%%
\section*{Checklist}

\begin{enumerate}

\item For all authors...
\begin{enumerate}
  \item Do the main claims made in the abstract and introduction accurately reflect the paper's contributions and scope?
    \answerYes{} Our main claims in the abstract and introduction accurately reflect the paper's contributions and scope. In particular, we listed our contributions in~\cref{sec:intro}, and forward referenced each contribution to the corresponding section in the paper.
  \item Did you describe the limitations of your work?
    \answerYes{} We clearly defined what is the quantity to measure and what the network is in~\cref{sec:prelim}, and what the assumptions are for each theorem in~\cref{sec:two-layer}.
  \item Did you discuss any potential negative societal impacts of your work?
    \answerYes{} We discussed them in~\cref{sec:societal}.
  \item Have you read the ethics review guidelines and ensured that your paper conforms to them?
    \answerYes{} We have read the ethics review guidelines. Because our work is to measure the smoothness of neural networks (see~\cref{sec:intro,sec:prelim}), and we only used the standard MNIST dataset (see~\cref{sec:ex-design}), the paper conforms to guidelines. 
\end{enumerate}

\item If you are including theoretical results...
\begin{enumerate}
  \item Did you state the full set of assumptions of all theoretical results?
    \answerYes{} We clearly defined the quantity to measure and the network structures in~\cref{sec:prelim,sec:two-layer,sec:dual,sec:dual_infty}.
        \item Did you include complete proofs of all theoretical results?
    \answerYes{} We provided important intuition and ideas in the main paper (see~\cref{sec:two-layer,sec:dual,sec:dual_infty}), and included complete proofs in~\cref{sec:app_proof}.
\end{enumerate}

\item If you ran experiments...
\begin{enumerate}
  \item Did you include the code, data, and instructions needed to reproduce the main experimental results (either in the supplemental material or as a URL)?
    \answerYes{} We included the code, data, and instructions as a URL.
  \item Did you specify all the training details (e.g., data splits, hyperparameters, how they were chosen)?
    \answerYes{} We included major experimental setting in~\cref{sec:ex-design}, and detailed specification in~\cref{sec:ex_spec}.
        \item Did you report error bars (e.g., with respect to the random seed after running experiments multiple times)?
    \answerNA{} Our experiments are deterministic. Given a neural network, our algorithm always returns the same result.
        \item Did you include the total amount of compute and the type of resources used (e.g., type of GPUs, internal cluster, or cloud provider)?
    \answerYes{} This was provided in~\cref{sec:ex_spec}.
\end{enumerate}

\item If you are using existing assets (e.g., code, data, models) or curating/releasing new assets...
\begin{enumerate}
  \item If your work uses existing assets, did you cite the creators?
    \answerYes{} In terms of dataset, we only used the standard MNIST, and cited the creators. All the tools used in the paper were properly cited. See~\cref{sec:ex-design,sec:ex_spec}.
  \item Did you mention the license of the assets?
    \answerYes{} See~\cref{sec:ex_spec}.
  \item Did you include any new assets either in the supplemental material or as a URL?
    \answerYes{} We included our code as a URL.
  \item Did you discuss whether and how consent was obtained from people whose data you're using/curating?
    \answerNA{} 
  \item Did you discuss whether the data you are using/curating contains personally identifiable information or offensive content?
    \answerNA{} 
\end{enumerate}

\item If you used crowdsourcing or conducted research with human subjects...
\begin{enumerate}
  \item Did you include the full text of instructions given to participants and screenshots, if applicable?
    \answerNA{} 
  \item Did you describe any potential participant risks, with links to Institutional Review Board (IRB) approvals, if applicable?
    \answerNA{} 
  \item Did you include the estimated hourly wage paid to participants and the total amount spent on participant compensation?
    \answerNA{} 
\end{enumerate}

\end{enumerate}
%%%%%%%%%%%%%%%%%%%%%%%%%%%%%%%%%%%%%%%%%%%%%%%%%%%%%%%%%%%%
\appendix

\section{Elided background, derivations and proofs}\label{sec:app_proof}
\subsection{Additional analysis background}\label{sec:analysis}
\paragraph{Gradient as operator}If a function $g: \R^m\rightarrow \R$ is a differentiable function at $a\in \R^m$, then the total derivative of $g$ at $a$ is
\[
    Dg(a) = [\frac{\partial g}{\partial x_1}(a), \ldots, \frac{\partial g}{\partial x_m}(a)],
\]
and the gradient of $g$ at $a$ is $\nabla g(a)$ is the transpose of $Dg(a)$. The linear approximation of $g$ at $a$ is $\inprod{Dg(a), dx}$. Equivalently, we can view the change of a function with respect to an infinitesimal perturbation as the inner product of $\nabla g(a)$ and $dx$. In this sense, the gradient acts as an operator on the perturbation. 

\paragraph{Differentiable activation}Because we want to upper bound the true Lipschitz constant, we only need to show that the quantity considered in the paper indeed upper bounds the true Lipschitz constant considered in the paper. If the activation function is differentiable, then the neural network $f$ is also differentiable, so~\cref{eq:FGL} is trivially true, as proved and applied in~\citet[Theorem 1 and Equation 4]{lipopt}.

\paragraph{ReLU activation}For ReLU networks, it is true if we have $[a, b] = [0, 1]$. One can consider the (Clarke) generalized Jacobian as in~\citet{jordan2021exactly}. At each input point, the Clarke Jacobian is contained in $\{(W^1)^T \cdot \ACTL(v^2) \cdot\cdots\cdot \ACTL(v^d) (W^{d})^T \mid v^i\in[a,b]^{n_i}\}$. Alternatively, we can also use the perturbation propagation argument in~\cref{sec:dual} to see this upper bound. Note that~\citet{certified_def} used this interval representation for ReLU's derivative.

\paragraph{Maximum over hypercube} Now we want to show that the optimization problems over hypercubes considered in this work attain the maximum at the vertices. Without of loss of generality, let us assume the hypercube is $[-1, 1]^n$. Otherwise, we can transform the hypercube to $[-1, 1]^n$.
Let $A\in \R^{m\times n}$, $x\in \R^n$, $y\in \R^m$, and $z\in \R^n$.

We will use the following facts\begin{enumerate}
    \item $\norm{x}_1 = \max_{z\in\{-1, 1\}^n}\inprod{x, z}$;
    \item $\ell_\infty$ is the dual of $\ell_1$;
    \item Let $U\subseteq \R^n$. When $\max_{x, z\in U}\inprod{Ax, Az}$ is well-defined, we have $\max_{x\in U}\inprod{Ax, Ax} = \max_{x, z\in U}\inprod{Ax, Az}$.
\end{enumerate}
The first fact is from $\norm{x}_1 = |x_1|+\ldots +|x_n| = \max_{z\in\{-1, 1\}^n}\inprod{x, z}$. The second fact is from Hölder's inequality for finite-dimensional vector space. For the third one, $\inprod{Ax, Az}$ is maximized only when $Ax = Az$. Now we can show that the maximization problems considered in this paper attain the maximum at the hypercube vertices.
\begin{align}\label{eq:vertex_1}
\begin{split}
\max_{\norm{x}_\infty=1}\norm{Ax}_1 &= \max_{\norm{x}_\infty=1, y\in\{-1, 1\}^m}\inprod{Ax, y} \quad\quad \text{(From fact (1))} \\&= \max_{\norm{x}_\infty=1, y\in\{-1, 1\}^m}\inprod{x, A^Ty} \\&= \max_{y\in\{-1, 1\}^m}\norm{A^Ty}_1 \quad\quad\quad\quad\;\;\; \text{(From fact (2))}\\&=  \max_{x\in\{-1, 1\}^n, y\in\{-1, 1\}^m}\inprod{x, A^Ty}\\&=  \max_{x\in\{-1, 1\}^n, y\in\{-1, 1\}^m}\inprod{Ax, y}.
\end{split}
\end{align}

\begin{align*}
\begin{split}
\max_{\norm{x}_\infty=1}\norm{Ax}_2^2 &= \max_{\norm{x}_\infty=1}\inprod{Ax, Ax} \\&= \max_{\norm{x}_\infty=1, \norm{z}_\infty=1}\inprod{Ax, Az}\;\;\;\;\;\;\;\; \text{(From fact (3))}\\&=
\max_{\norm{x}_\infty=1, \norm{z}_\infty=1}\inprod{A^TAx, z}.
\end{split}
\end{align*}

Using the similar idea in~\cref{eq:vertex_1}, we have \[\max_{\norm{x}_\infty=1}\norm{Ax}_2^2 = \max_{x\in\{-1,1\}^n, z\in\{-1,1\}^n}\inprod{Ax, Az} = \max_{x\in\{-1,1\}^n}\inprod{Ax, Ax}.\]
More generally, in the bilinear forms considered above, if $x = x_1\otimes\cdots\otimes x_d$ is generated by the tensor product of variables over cubes, we can fix one variable and write $x$ as a matrix product, and then move the fixed variable to the hypercube vertices. We can repeat this process to move all variables to the vertices. 

\subsection{Additional definitions}
For any neural network $f$, let $OPT(f)$ be the optimal value of~\cref{eq:FGL}. We say an algorithm $\mathcal{A}$ is an approximation algorithm for~\cref{eq:FGL} with approximation ratio $\alpha > 1$, if $OPT(f)\leq \mathcal{A}(f) \leq \alpha OPT(f)$.

\subsection{SDP for the $\infty\rightarrow 1$ mixed-norm problem}\label{sec:mixed-norm}
Recall that for $v\in \R^m$, $\norm{v}_1 = \max_{\norm{u}_\infty = 1}\inprod{u, v}$. We can reformulate the mixed-norm problem as follows:
\[
    \max_{x\in \{-1, 1\}^n} \norm{Ax}_1 = \max_{(x,y)\in \{-1, 1\}^{n+m}} \inprod{Ax, y}.
\]

If we let $z = \begin{pmatrix}
x^T & y^T
\end{pmatrix}$, we can have
\[
\max_{(x,y)\in \{-1, 1\}^{n+m}} \inprod{Ax, y} = \max_{z\in \{-1, 1\}^{n+m}} z\cdot B\cdot z^T,
\]
where $B$ is a $(m+n)\times (m+n)$ matrix. The last $m$ rows and first $n$ columns of $B$ is $A$, and the rest are $0$: $B = \begin{pmatrix}
0 & 0 \\
A & 0
\end{pmatrix}$.

The natural SDP relaxation of the $\infty\rightarrow 1$ mixed-norm problem is:
\begin{align*}
    \max \trace(BX)\\ 
    s.t.\;  X\succeq 0, X_{ii}=1, & i\in [n+m].
\end{align*}
In other words, we treat $X$ as the SDP matrix relaxed from the rank-1 matrix $z^T\cdot z$.

\subsection{SDP relaxation and Grothendieck inequalities}\label{sec:grothSDP}
In this work, we used the Grothendieck inequality as in~\cref{thm:groth-local}:
\begin{equation}\label{eq:groth}
    \max_{u_i, v_j\in B(H)}\sum_{i,j} A_{ij}\inprod{u_i, v_j}_H \leq K_G\norm{A}_{\infty\rightarrow 1},
\end{equation}
for any $A\in \R^{n\times m}$; and the little Grothendieck inequality:
\begin{equation}\label{eq:little-groth}
  \max_{u_i, v_j\in B(H)}\sum_{i,j} (A^TA)_{ij}\inprod{u_i, v_j}_H \leq \frac{\pi}{2}\norm{A}^2_{\infty\rightarrow 2}.  
\end{equation}
Notice that $A^TA$ is a PSD matrix.

As discussed in~\cref{sec:mixed-norm},
\[
\norm{A}_{\infty\rightarrow 1} = \max_{z\in \{-1, 1\}^{n+m}} z\cdot B\cdot z^T.
\]
The natural SDP relaxation is
\begin{align*}
    \max \trace(BX)\\ 
    s.t.\;  X\succeq 0, X_{ii}=1, & i\in [n+m].
\end{align*}
Because $X\succeq 0$, $X = MM^T$ for some $M\in\R^{(m+n)\times d}$, where $d\geq 1$. Let $M_i$ be the $i$-th row vector of $M$. $X_{ij} = \inprod{M_i,M_j}$, and $X_{ii}=1$ means $\inprod{M_i, M_i} = 1$. As a result, $\trace(BX) = \sum_{i,j} A_{ij}X_{ij} = \sum_{i,j} A_{ij}\inprod{M_i, M_j}_H$, where $H$ is the Hilbert space of $\R^d$ equipped with the canonical inner product. Thus, \cref{eq:groth} implies that $K_G$ is the approximation ratio in the SDP relaxation for the mixed-norm problem.

In contrast, in the mixed-norm problem, the variable to $B_{ij}$ is $z_iz_j$, the product of two scalars. 
If $d=1$ in the SDP relaxation, $M$ is a column vector, and $X$ is a rank-1 matrix. In this case, the SDP coincides with the combinatorial problem, because the inner product degenerates to the multiplication of two scalars. Hence, the SDP relaxation can be viewed as a continuous relaxation of a discrete problem, and \cref{eq:groth} quantifies this geometric transformation. Another interpretation for the SDP relaxation is that SDP drops the rank-1 constraint in the quadratic formulation of the mixed-norm problem.
 
\subsection{Rescaling from 0-1 cube to norm-1 cube}\label{sec:rescale}
Now let us show how we transform the 0-1 cube in~\cref{eq:two_layer_obj} to a norm-1 cube, and formulate an equivalent optimization problem. As a result, we can apply the SDP program in~\cref{eq:sdp_mnp1} to compute an upper bound of the $\ell_\infty$-FGL. 

Let $x_i = (t_i+1)/2$, where $t_i\in\{-1,1\}$. We have
\begin{align}\label{eq:shift}
\begin{split}
    &\max_{x\in \{0, 1\}^n} \norm{Ax}_1 \\= &\max_{x\in \{0, 1\}^n, y\in \{-1, 1\}^m} y^TAx \\= &\max_{t\in \{-1, 1\}^n, y\in \{-1, 1\}^m} \frac{1}{2}y^TA(t+e_n).
\end{split}
\end{align}
Let $OPT_1$ be the optimal value of
\[
    \max_{(t,y)\in \{-1, 1\}^{n+m}} y^TA(t+ e_n).
\]
Introduce another variable $\tau\in\{-1,1\}$, and let $OPT_2$ be the optimal value of
\begin{equation}\label{eq:extra_var}
    \max_{(t,y,\tau)\in \{-1, 1\}^{n+m+1}} y^TA(t+ \tau e_n).
\end{equation}
\begin{lemma}\label{lem:equality}
    $OPT_1=OPT_2$.
\end{lemma}

\begin{proof}
Clearly $OPT_2\geq OPT_1$. 

Now if $(\hat{t}, \hat{y}, \tau=-1)$ is an optimal solution to~\cref{eq:extra_var}, then $(-\hat{t}, -\hat{y}, \tau=1)$ is also an optimal solution, so $OPT_2\leq OPT_1$.
\end{proof}

Now let $z = (t, \tau)$, and we can verify that 
$y^TA(t+ \tau e_n) =  y^T B z$, where
$B = \begin{pmatrix}
A & Ae_n
\end{pmatrix}$.

As a result, the semidefinite program to the $\ell_\infty$-FGL constant is 
\begin{align*}
\begin{split}
    \max \;&\frac{1}{2}\trace(BX)\\ 
    s.t.\;  X \succeq 0, X_{ii}=1&, i\in [n+m+1],
\end{split}
\end{align*}
where $B$ is a $(n+1+m)\times (n+1+m)$ matrix, and 
$B = \begin{pmatrix}
0 & 0 & 0\\
A & Ae_n & 0
\end{pmatrix}$.

\subsection{Proof of~\cref{thm:hardness}}
\begin{proof}
We will use the cube rescaling techniques introduced in~\cref{sec:rescale}. \citet{cut-norm} showed that matrix cut-norm is $\MAXSNP$-hard. We will show that if one can solve the FGL estimation problem, then one can find the cut norm of a matrix.

Given a matrix $A$, the cut norm of a matrix $A\in \R^{m\times n}$ is defined as 
\[
    CN(A) = \max_{x\in\{0,1\}^n, y\in \{0,1\}^m} \inprod{Ax, y}.
\]

We need to transform $y$ from 0-1 cube to norm-1 cube, so similarly let $y_i = (t_i+1)/2$, where $t_i\in \{-1,1\}$. The we will have
\[
    CN(A) = \max_{x\in\{0,1\}^n, y\in \{0,1\}^{m}} \inprod{Ax, y} = \frac{1}{2}\max_{x\in\{0,1\}^n, t\in \{-1,1\}^{m}} \inprod{Ax, (t+e_m)}.
\]

Let $B = \begin{pmatrix}
A\\
e_m^TA
\end{pmatrix}$. From above we know that 
\[
\max_{x\in\{0,1\}^n, t\in \{-1,1\}^{m}} \inprod{Ax, (t+e_m)}
= \max_{x\in\{0,1\}^n, (t, \tau)\in \{-1,1\}^{m+1}} \inprod{Bx, (t,\tau)}.
\]

One can then construct a two layer neural network, where the first weight matrix is $B^T$, and the second weight matrix is $(1,\ldots, 1)\in \R^{n}$. Because the network we consider has only one output, the second weight matrix is only a vector. The FGL of this network is exactly twice of the cut norm of $A$.
\end{proof}

\subsection{Proof of~\cref{thm:infty-FGL}}
\begin{proof}
Let $B = \begin{pmatrix}
0 & 0 & 0\\
A & Ae_n & 0
\end{pmatrix}$. Combing~\cref{eq:shift,eq:extra_var,eq:sdp_mnp1}, the approximation algorithm for~\cref{eq:two_layer_obj} where $q=1$ is induced by the following SDP program:
\begin{align*}%\label{eq:inf_sol}
\begin{split}
    \max \;&\frac{1}{2}\trace(BX)\\ 
    s.t.\;  X \succeq 0, X_{ii}=1&, i\in [n+m+1].
\end{split}
\end{align*}
\end{proof}

\subsection{Natural SDP relaxation of $\ell_2$-FGL estimation}\label{sec:l2fgl}
Now let $q=2$ in~\cref{eq:two_layer_obj}, we will have:
\[
    \max_{y\in\{0,1\}^n} \norm{Ay}_2.
\]

In other words, we only need to solve the following program:
\begin{equation}\label{eq:l2_qp}
    \max_{z\in \{0,1\}^n} z^T (A^T A) z.
\end{equation}

Let $M = A^TA$, then $M$ is a PSD matrix. We have demonstrated the scaling techniques in~\cref{sec:rescale}. Let $x\in\{-1, 1\}^{n+1}$, one can verify that
\begin{equation}\label{eq:l2_shift}
    \max_{z\in \{0,1\}^n} z^T M z = \frac{1}{4}\max_{x\in \{-1,1\}^{n+1}} x^T \hat{M} x,
\end{equation}
where $\hat{M} = \begin{pmatrix}
M & Me_n\\
e_n^TM & e_n^TMe_n
\end{pmatrix}$.

It is easy to verify that if $M$ is PSD, $\hat{M}$ is also PSD. Because $M = A^TA$, $\hat{M} = (A, Ae_n)^T\cdot (A, Ae_n)$. Now we can consider the following natural SDP relaxation to $\max_{x\in \{-1,1\}^{n+1}} x^T \hat{M}x$:
\begin{align}\label{eq:nat-2}
\begin{split}
   \max \trace(\hat{M}X)\\
    s.t.\;  X \succeq 0, X_{ii}=1, i\in &[n+1]. 
\end{split}
\end{align}

This SDP relaxation admits a $\frac{\pi}{2}$-approximation factor from~\cref{eq:little-groth}~\cite{Rietz,nest}. 

\subsection{Proof of~\cref{thm:2-FGL}}
\begin{proof}
Let $\hat{M} = \begin{pmatrix}
M & Me_n\\
e_n^TM & e_n^TMe_n
\end{pmatrix}$, where $M = A^T A$. Combining~\cref{eq:l2_qp,eq:l2_shift,eq:nat-2}, the approximation algorithm for~\cref{eq:two_layer_obj} where $q=2$ is induced by the following SDP program:
\begin{align}%\label{eq:l2-SDP}
\begin{split}
   \max \frac{1}{2}\sqrt{\trace(\hat{M}X)}\\
    s.t.\;  X \succeq 0, X_{ii}=1, i\in &[n+1]. 
\end{split}
\end{align}
\end{proof}

\subsection{Comparison with~\citet{certified_def}}\label{sec:compar}
\citet{certified_def} formulated the following SDP to upper bound the $\ell_\infty$-FGL on two-layer neural networks:
\begin{align}\label{eq:cert}
\begin{split}
    \max \frac{1}{4}\trace(CX)\\ 
    s.t.\;  X \succeq 0, X_{ii}=1, i\in&[n+m+1],
\end{split}
\end{align}
where $C$ is a $(m+n+1)\times (m+n+1)$ matrix, and 
$C = \begin{pmatrix}
0 & 0 & A^T\\
0 & 0 & e_n^T A^T\\
A & Ae_n & 0
\end{pmatrix}$.

If we compare~\cref{eq:sdp_inf,eq:cert}, $C = B + B^T$. Because $X$ is symmetric, $\trace(CX) = 2 \trace{(BX)}$. Therefore, \cref{eq:sdp_inf,eq:cert} produce the same result.

\subsection{Equivalence between the new optimization program and~\cref{eq:l2_qp}}\label{sec:equiv}

Notice that because $u\in \R^{1\times n}$, $u\Delta \act(x)$ is a scalar. We can view each $z_i$ in~\cref{eq:l2_qp} as $\frac{\Delta \act(x)_i}{\Delta y_i}$, the derivative of $\act(x)_i$ without the limit. Therefore, $\Delta \act(x)_i = z_i \Delta y_i$. Recall that from~\cref{sec:dual}, $\Delta y_i = w_i\Delta x$, so $\Delta \act(x)_i = w_i z_i \Delta x$, then $u\Delta\act(x) = \Delta x\sum_i^n u_iz_i w_i = \Delta x (Az)$, where $A = W^T\ACTL(u)$ as defined in~\cref{eq:two_layer_obj}. 

As a result, from Cauchy–Schwarz inequality, the above objective is
\begin{align*}
\begin{split}
    \max_{\Delta x,\Delta \act(x)} \frac{(u\Delta \act(x))^2}{(\Delta x)^2} &= \max_{z} (Az)^2,\\ 
    s.t.\;  z \in &[a,b]^n.
\end{split}
\end{align*}

This demonstrates the equivalence between the new optimization program and~\cref{eq:l2_qp} when $[a, b] = [0,1]$ for $\act = \relu$. 

%An immediate result is the  Nesterov $\pi/2$ theorem, which can be viewed as a generalization the seminal Goemans-Williamson algorithm. This result enables a tight SDP relaxation for $\ell_2$ FGL constant.

%Recall that the $\ell_2$ FGL is 

%\[
%\max_{Y\in\{0, 1\}^n} \norm{A Y}_2.
%\]

%This is equivalent to optimizing 

%\[
%    \max_{Y\in\{0, 1\}^n} \norm{A Y}_2^2 = \max_{Y\in\{0, 1\}^n} Y^T (A^TA) Y.
%\]

%If 

\section{Polynomial optimization approach to the FGL estimation}\label{sec:poly}
We briefly discuss the gradient approach to estimate the FGL. Let us use a three layer network as an example:
\begin{equation*}%\label{eq:3net}
f(x)= u  \act(V\act(Wx + b_1) + b_2),
\end{equation*}
where $x\in \R^{l\times 1}$, $W\in \R^{n\times l}$, $b_1\in \R^n$, $V\in \R^{m\times n}$, $b_2\in \R^m$ and $u\in \R^{1\times m}$ . 

%of~\cref{eq:3net} as an example, and the multi-layer version follows easily. 

The formal gradient vector of this network is
\[
W^T \ACTL(y)V^T\ACTL(z) u^T,
\]
where $\ACTL(y)\in \R^{n\times n}$ and $\ACTL(z)\in \R^{m\times m}$.
The $i$-th component of this vector is then
\[
    \sum_{k=1}^m\sum_{j=1}^n (u_{k}V_{kj} W_{ji})\cdot(y_jz_k).
\]

Therefore, the $\ell_p$-norm estimation of the formal gradient ends up being a polynomial optimization problem over a cube. For example, the $\ell_1$-norm (corresponding to $\ell_\infty$-perturbations) of the gradient is
\begin{equation}\label{eq:tensor-cut-norm}
\max_{x_i\in\{-1, 1\}, y_j\in\{0, 1\}, z_k\in\{0, 1\}} \sum_{i,j,k=1}^{l,n,m} T_{ijk}\cdot x_iy_jz_k,
\end{equation}
where $T_{ijk} = W_{ji}V_{kj}u_{k}$.

This is essentially a tensor cut-norm problem, and it is an open problem whether there exists an approximation algorithm within a constant factor to the general tensor cut-norm problem~\citep{tensor-cut-norm}. Notice that~\cref{eq:tensor-cut-norm} is not a general tensor-cut-norm problem, because the tensor is generated from the weight matrices. For example, if we fix $j$, the projected matrices of $T$ are of rank-1. Each vector in $T_{:,j,:}$ is the product of $V_{kj}u_{k}$ with the vector $W_{j:}$:
\[
    \forall k: T_{:,j,k} = W_{j:}V_{kj}u_{k}.
\]

However, we do not have the theoretical technique to exploit the low-rank structure of these special polynomial optimization problems. The perturbation analysis in~\cref{sec:dual,sec:dual_infty} can be viewed as exploiting this structure in practice.

%Now let's consider the dual program. Still introduce $\objv$ such that

%\begin{algorithm}[tb]
%   \caption{GeoLIP for Multiple Layer Networks}
%   \label{alg:GeoLIP}
%\begin{algorithmic}
%   \State {\bfseries Input:} A list of $d$ weight matrices $W_i$
%   \If{$d$ is even}
%        \State Group weights into pairs of weights
%    \Else
%        \State Group weights into pairs of weights except for the final three weights
%    \EndIf
%   \State Apply the Jacobian relaxation to all pairs of weights except for the final group to find the Jacobian norm of all pairs of weights
%   \If{$d$ is even}
%        \State Apply the SDP to the final pair of weights to compute the gradient norm
%    \Else
%        \If{Apply tensor relaxation to the final three matrices is feasible}
%        \State Apply tensor relaxation to the final three matrices
%        \Else
%        \State Compute the $\ell_1$-norm of the first matrix, and apply the SDP to the final pair of weights
%        \EndIf
%    \EndIf
%   \State Multiply the result norm from each group together
%\end{algorithmic}
%\end{algorithm}

\section{Complete experimental specifications and results}\label{sec:more_eval}
%\paragraph{Single Hidden Layer}
%We consider two layer neural networks with different numbers of hidden units. 
%The results for two layer networks are summarized in~\cref{tb:TLLC,tb:TLTime}.
GeoLIP is available at \url{https://github.com/z1w/GeoLIP}. To accommodate users who do not have access to MATLAB, we also implement a version based on CVXPY~\citep{diamond2016cvxpy}. However, the MATLAB implementation works more efficiently in terms of memory and speed, and we encourage users to work with the MATLAB version when possible. We conducted all the GeoLIP-related experiments with the MATLAB version.

\subsection{Experimental specifications}\label{sec:ex_spec}
\paragraph{Tools}We obtain the LiPopt implementation from \url{https://github.com/latorrefabian/lipopt}, under the MIT License.

\paragraph{Server specification}All the experiments are run on a workstation with forty-eight Intel\textsuperscript{\textregistered} Xeon\textsuperscript{\textregistered} Silver 4214 CPUs running at 2.20GHz, and 258 GB of memory, and eight Nvidia GeForce RTX 2080 Ti GPUs. Each GPU has 4352 CUDA cores and 11 GB of GDDR6 memory.

\paragraph{Dataset and split}
We used the standard MNIST dataset from the PyTorch package~\citep{pytorch}. We used the ``train'' parameter in the MNIST function to split training and testing data.
\subsection{Experimental results}

\begin{table}[t]
\caption{$\ell_\infty$-FGL estimations of different methods for two-layer networks: DGeoLIP and NGeoLIP induce the same estimations, and they are also close to the sampled lower bounds. In the meantime, the result from GeoLIP is tighter than LiPopt's result.}\label{tb:TLLC}
%\label{sample-table}
\begin{center}
\begin{small}
\begin{sc}
\begin{tabular}{lcccccr}
\toprule
\#Units&DGeoLIP & NGeoLIP & LiPopt-2 & MP & Sample & BruF \\
\midrule
8    & 142.19& 142.19& 180.38& 411.90& 134.76 & 134.76\\
16   &185.18& 185.18& 259.44 & 578.54 & 175.24 & 175.24\\
64    & 287.60&287.60& 510.00& 1207.70 & 253.89 & N/A\\
128    & 346.27&346.27& 780.46& 2004.34 & 266.22 & N/A\\
256    & 425.04&425.04& 1011.65& 2697.38 & 306.98 & N/A\\
%512    & 537.80& 1452.41& 4755.23 &  376.06\\
\bottomrule
\end{tabular}
\end{sc}
\end{small}
\end{center}
\end{table}

\begin{table}[t]
\caption{Average running time (in seconds) of different methods for two-layer-network $\ell_\infty$-FGL estimations: GeoLIPs are faster than LiPopt.}\label{tb:TLTime}
%\label{sample-table}
\begin{center}
\begin{small}
\begin{sc}
\begin{tabular}{lccccr}
\toprule
\# Hidden Units& DGeoLIP & NGeoLIP & LiPopt-2 & BruF \\
\midrule
8    & 23.1& 21.5& 1533 & < 0.1\\
16    & 28.1& 22.3& 1572 & 4.8\\
64    & 93.4& 31.7& 1831 & N/A\\
128    & 292.5& 42.2& 2055 & N/A\\
256    & 976.0& 70.9& 2690 & N/A\\
\bottomrule
\end{tabular}
\end{sc}
\end{small}
\end{center}
\end{table}
\paragraph{Single hidden layer}
We consider the $\ell_\infty$-FGL estimation on two layer neural networks with different numbers of hidden units. The results are summarized in~\cref{tb:TLLC,tb:TLTime}.

%\paragraph{Multiple Hidden Layers}
%We consider $3$, $7$, $8$-layer neural networks. Each hidden layer in the network has $64$ $\relu$ units. %report results from GeoLIP and matrix product method. Only GeoLIP and matrix product method can scale to the networks considered here. Here we consider three networks: $(64, 16)$-network, $(64, 64, 64, 64, 64, 16)$-network and $(64, 64, 64, 64, 64, 64, 16)$-network. 
%The results are summarized.

\begin{table}[t]
\caption{$\ell_\infty$-FGL estimations of different methods for multi-layer networks: GeoLIP's result is much tighter than the matrix-product method, and LiPopt is unable to handle these networks.}\label{tb:MLLC}
%\label{sample-table}
\begin{center}
\begin{small}
\begin{sc}
\begin{tabular}{lccccr}
\toprule
\# Layers&GeoLIP &  Matrix Product & Sample & liptopt \\
\midrule
3    & 529.42& 9023.65 & 311.88 & N/A\\
7    & 5156.5& $1.423*10^7$ & 1168.8& N/A \\
8    & 8327.2& $8.237*10^7$ & 1130.6& N/A\\
\bottomrule
\end{tabular}
\end{sc}
\end{small}
\end{center}
\end{table}

\begin{table}[t]
\caption{Average running time (in seconds) of GeoLIP for multi-layer-network $\ell_\infty$-FGL estimations.}\label{tb:MLTime}
%\label{sample-table}
\begin{center}
\begin{small}
\begin{sc}
\begin{tabular}{lcccr}
\toprule
3-Layer Net& 7-Layer Net & 8-Layer Net \\
\midrule
 120.9& 284.3& 329.5 \\
\bottomrule
\end{tabular}
\end{sc}
\end{small}
\end{center}
\end{table}

\paragraph{Multiple hidden layers}
We consider the $\ell_\infty$-FGL estimation on $3$, $7$, $8$-layer neural networks. Each hidden layer in the network has $64$ $\relu$ units. %report results from GeoLIP and matrix product method. Only GeoLIP and matrix product method can scale to the networks considered here. Here we consider three networks: $(64, 16)$-network, $(64, 64, 64, 64, 64, 16)$-network and $(64, 64, 64, 64, 64, 64, 16)$-network. 
The results are summarized in~\cref{tb:MLLC,tb:MLTime}.

\paragraph{$\ell_2$-FGL estimation}We measure the $\ell_2$-FGL on two-layer networks mainly to compare whether~\cref{eq:l2-SDP} and LipSDP produce the same result. Additionally, we also want to empirically examine the approximation guarantee from~\cref{thm:2-FGL}. Still, we consider networks with $8$, $16$, $64$, $128$, $256$ hidden nodes. The results are summarized in~\cref{tb:dual2,tb:l2Time}.

\begin{table}[t]
\caption{$\ell_2$-FGL estimations of different methods for two-layer networks: LipSDP and NGeoLIP induce the same estimations, and these results are also close to the sampled lower bounds.}\label{tb:dual2}
%\label{sample-table}
\begin{center}
\begin{small}
\begin{sc}
\begin{tabular}{lccccr}
\toprule
\#Units&NGeoLIP & LipSDP & MP & Sample & BruF \\
\midrule
8    & 6.531& 6.531& 11.035& 6.527 & 6.527\\
16  & 8.801& 8.801& 13.936& 8.795 & 8.799\\
64    & 12.573& 12.573& 22.501& 11.901 & N/A\\
128    & 15.205& 15.205& 30.972& 13.030 & N/A\\
256    & 18.590& 18.590& 35.716& 14.610 & N/A\\
\bottomrule
\end{tabular}
\end{sc}
\end{small}
\end{center}
\end{table}

\begin{table}[t]
\caption{Average running time (in seconds) of LipSDP and NGeoLIP for two-layer-network $\ell_2$-FGL estimations.}\label{tb:l2Time}
%\label{sample-table}
\begin{center}
\begin{small}
\begin{sc}
\begin{tabular}{lccccr}
\toprule
\# Hidden Units& LipSDP & NGeoLIP & BruF  \\
\midrule
8    & 11.5& 1.2 & < 0.1\\
16    & 15.7& 1.2 & 5.1\\
64    & 64.2& 1.3 & N/A\\
128    & 216.1& 1.7 & N/A\\
256    & 758.1& 4.1 & N/A\\
\bottomrule
\end{tabular}
\end{sc}
\end{small}
\end{center}
\end{table}

\subsection{Additional discussions}
\paragraph{Duality} The results in~\cref{tb:dual2} show that the results of LipSDP and GeoLIP on two-layer-network $\ell_2$-FGL estimation are exactly the same, which empirically demonstrates the duality between LipSDP and GeoLIP, as discussed in~\cref{sec:dual}. Though~\citet{anti_lipSDP} showed that the most precise version of LipSDP is invalid for estimating an upper bound of $\ell_2$-FGL, our dual-program argument shows that the less precise version of LipSDP is correct.

\paragraph{Precision}We showed that GeoLIP's approximation factor for the $\ell_2$-FGL estimation on two layer networks is $\sqrt{\frac{\pi}{2}}\approx 1.253$ in~\cref{thm:2-FGL}. The $\ell_2$-FGL from GeoLIP is very close to the sampled lower bound of true Lipschitz constant in~\cref{tb:dual2}. On the other hand, because the result from GeoLIP is an upper bound of FGL, and this result is not much greater than the sampled lower bound of true Lipschitz constant, this empirically demonstrates that the true Lipschitz constant is not very different from the FGL on two-layer networks. 

\paragraph{Running time}If we compare the running time in~\cref{tb:TLTime,tb:l2Time}, the dual program takes more time to solve than the natural relaxation. This is particularly true when the number of hidden neurons increases. From the reported numbers of variables and equality constraints by CVX, the dual program and natural relaxation have similar numbers. It is also observed that the CPU usage is higher when the natural relaxation is being solved. 
We want to point out that the running time and optimization algorithm are solver-dependent, and efficiently solving SDP is beyond the scope of this work. It is an interesting future direction to exploit the block structure of the dual programs, and develop algorithms that are compatible with those programs, because training smooth networks is a critical task, and it is promising to incorporate the SDP programs.

\paragraph{$\ell_2$ versus $\ell_\infty$ FGLs} If we compare results from~\cref{tb:TLLC,tb:dual2}, we can also find that the discrepancy between matrix product method and sampled lower bound is much smaller in the $\ell_2$ case. This could also explain why Gloro works for $\ell_2$-perturbations but not the $\ell_\infty$ case in practice, where~\citet{gloro} used matrix-norm product to upper bound the Lipschitz constant of the network in Gloro.

\paragraph{Sampling} Sampling can only give a lower bound of the true Lipschitz constant, while we are trying to estimate an upper bound. We use sampling as a sanity check to ensure that the SDP method is at least sound and indeed provides an upper bound of the FGL. It is interesting to see that in networks where we can brute-force enumerate all the activation patterns, sampling provides very close results to the ground-truth ones. Notice that for those networks, there are only a few hidden units (8 or 16), while we sample many (200,000) inputs, which might activate all or most of the patterns. However, for networks with many activation nodes, it is infeasible to have a brute-force enumeration of all the activation patterns, so we do not have the ground-truth information. Sampling has no guarantee whether it can activate all patterns unless we have sampled all possible inputs, which is also impractical.

\paragraph{Multi-layer network guarantees}The discrepancy between the results from sampling and GeoLIP is relatively large for multi-layer networks. The approximation guarantee of GeoLIP is in terms of the FGL, rather than true Lipschitz constant. It is unclear how large the gap between true Lipschitz constant and the FGL is for multi-layer networks. Narrowing this gap is an interesting research direction and beyond the scope of this work. We do not know whether for multi-layer networks, GeoLIP has an approximation guarantee that is independent of the network. We leave this as an open problem.

\section{Negative societal impacts}\label{sec:societal}
Our work is mainly theoretical and to measure an intrinsic mathematical property of neural networks, and can benefit the verification of deep-learning systems. A misuse of our work can give a false sense of safety, so the practical use of our work should be careful.

\end{document}